\newcommand{\E}{\mathbb{E}}
\renewcommand{\P}{\mathbb{P}}
\newcommand{\R}{\mathbb{R}}
\DeclareMathOperator{\LS}{LS}
\DeclareMathOperator{\Tr}{Tr}
\newlength{\levlength}\setlength{\levlength}{0.5cm}
\renewcommand{\epsilon}{\varepsilon}
\newcommand{\cG}{\mathcal G}
\newcommand{\cT}{\mathcal T}
\newcommand{\cA}{\mathcal A}
\newcommand{\cX}{\mathcal X}
\newcommand{\RR}{\mathbb{R}}
\newcommand{\cN}{\mathcal N}
\newcommand{\cF}{\mathcal F}
\DeclareMathOperator{\Span}{Span}
\newcommand{\alg}{\mathrm{Alg}}
\newcommand{\hati}{\hat \iota}
\renewcommand{\le}{\leqslant}
\renewcommand{\ge}{\geqslant}
\renewcommand{\leq}{\leqslant}
\renewcommand{\geq}{\geqslant}
\newcommand{\Texpl}{\cT_{\mathrm{expl}}}
\newcommand{\expl}{\mathrm{expl}}
\newcommand{\commit}{\mathrm{commit}}
\newcommand{\warmup}{\mathrm{warm-up}}
\newcommand{\algoname}[1]{\texttt{#1}}
\newcommand{\nous}{\algoname{E2TC}}
\theoremstyle{plain} \newtheorem{theorem}{Theorem}[section]
\newtheorem{remark}{Remark}[section] \newtheorem{definition} {Definition} [section]
\newtheorem{lemma} {Lemma} [section] \newtheorem{corollary} {Corollary} [section]
 \theoremstyle{definition}
\newtheorem{proposition}{Proposition}[section] 
\newtheorem*{lemma*}{Lemma}
\newglossaryentry{T}
{
    type=environment,
    name={\ensuremath{T}},
    description = {Time horizon},
}
\newglossaryentry{setT}
{
    type=environment,
    name={\ensuremath{[T]}},
    description = {Set of time step until $[T] = \{1,2,...,T\}$},
}
\newglossaryentry{dim}
{
    type=environment,
    name={\ensuremath{d}},
    description = {Dimension of the problem considered},
}
\newglossaryentry{A}
{
    type=environment,
    name={\ensuremath{A}},
    description = {Definite positive matrix defining a norm and an ellipsoid action set},
}
\newglossaryentry{lambdamax}
{
    type=environment,
    name={\ensuremath{\lambda_{\max}}},
    description = {Maximal eigenvalue of a the matrix},
}
\newglossaryentry{lambdamin}
{
    type=environment,
    name={\ensuremath{\lambda_{\min}}},
    description = {Minimal eigenvalue of a the matrix},
}
\newglossaryentry{norm}
{
    type=environment,
    name={\ensuremath{\|u\|_{M}}},
    description = {Norm defined by the matrix $M$, $\|u\|_{M} = \sqrt{u^\top M u}$ },
}
\newglossaryentry{actionset}
{
    type=environment,
    name={\ensuremath{\mathcal{X}}},
    description = {The action set $\mathcal{X} \subset \mathbb{R}^d$},
}
\newglossaryentry{center}
{
    type=environment,
    name={\ensuremath{c}},
    description = {Center of the ellipsoid that define the action set},
}
\newglossaryentry{theta}
{
    type=environment,
    name={\ensuremath{\theta}},
    description = {Unkown parameter of the model $\theta \in \mathbb{R}^d$},
}
\newglossaryentry{xstar}
{
    type=environment,
    name={\ensuremath{x^{\star}(\theta)}},
    description = {Optimal action as a function $\theta$ and $A$},
}
\newglossaryentry{actiont}
{
    type=environment,
    name={\ensuremath{x_t}},
    description = {Action taken at time $t$},
}
\newglossaryentry{rewardt}
{
    type=environment,
    name={\ensuremath{y_{t}}},
    description = {Reward received at time $t$},
}
\newglossaryentry{noiset}
{
    type=environment,
    name={\ensuremath{z_{t}}},
    description = {Noise at time $t$},
}
\newglossaryentry{noiselvl}
{
    type=environment,
    name={\ensuremath{\sigma}},
    description = {Subgaussian parameter of the noise},
}
\newglossaryentry{avregret}
{
    type=environment,
    name={\ensuremath{R_{T}(\theta)}},
    description = {Average regret at time $T$ on the environment parametrized by $\theta$},
}
\newglossaryentry{lower}
{
    type=algorithm,
    name={\ensuremath{\Omega}},
    description={Lowerbound Notation, "$f(\phi)= \Omega(g(\phi))$" iff $\exists c>0, \forall \phi, f(\phi) \geq c g(\phi)$},
}
\newglossaryentry{upper}
{
    type=algorithm,
    name={\ensuremath{O}},
    description={Upperbound Notation, "$f(\phi)= O(g(\phi))$" iff $\exists c>0, \forall \phi, f(\phi) \leq c g(\phi)$},
}
\newglossaryentry{forbidO}
{
    type=algorithm,
    name={\ensuremath{\widetilde{O}}},
    description={Upperbound Notation with hidden polylog terms},
}
\newglossaryentry{normtheta}
{
    type=algorithm,
    name={\ensuremath{B}},
    description={Norm $A$ of the parameter $\theta$},
}
\newglossaryentry{thetaset}
{
    type=algorithm,
    name={\ensuremath{\mathcal{E}_A(B)}},
    description = {Elliptical ensemble of vectors for which the norm is $B$ in $\|. \|_A$ norm},
}
\newglossaryentry{Algorithm}
{
    type=algorithm,
    name={\ensuremath{\mathcal{A}}},
    description={Generic notation for an algorithm},
}
\newglossaryentry{xrepa}
{
    type=algorithm,
    name={\ensuremath{w_t}},
    description={Reparametrization of the action $x_t$ centered and in base $A$, $x_t = A^{ \frac{1}{2}} w_t + c$},
}
\newglossaryentry{epsilondist}
{
    type=algorithm,
    name={\ensuremath{\epsilon^2}},
    description={Parameter in the proof to set the distance between two confusing problems' parameter},
}
\newglossaryentry{confusingfunc}
{
    type=algorithm,
    name={\ensuremath{\phi}},
    description={Function to create confusing parameters in the lowerbound},
}
\newglossaryentry{confusingvec}
{
    type=algorithm,
    name={\ensuremath{\xi}},
    description={Vector in $\xi \in \{-1,1\}^d$ to create confusing parameters in the lowerbound},
}
\newglossaryentry{confusetheta}
{
    type=algorithm,
    name={\ensuremath{\theta(\xi)}},
    description={Confusing parameters in the lowerbound},
}
\newglossaryentry{directstoptime}
{
    type=algorithm,
    name={\ensuremath{\tau_i}},
    description={Stopping time to control relative entropy in the $i$-th direction},
}
\newglossaryentry{directregret}
{
    type=algorithm,
    name={\ensuremath{U_i}},
    description={Random variable representing directional regret in the $i$-th direction},
}
\newglossaryentry{flipcoordinate}
{
    type=algorithm,
    name={\ensuremath{\xi^{(-i)}}},
    description={Flip the ith coordinate of the vector $\xi$. $ \forall j,
\xi^{(-i)}_j := (-1)^{ \mathbbm{1}(i=j) } \xi_j 
$ },
}
\newglossaryentry{probaithetadistib}
{
    type=algorithm,
    name={\ensuremath{P_{\theta(\xi),i}}},
    description={the probability distribution of $(w_t)_{t \le \tau_i(T)}$
    under parameter $\theta(\xi)$},
}
\newglossaryentry{constKL}
{
    type=algorithm,
    name={\ensuremath{C}},
    description={Parameter used to set a constant relative entropy between problems},
}
\newglossaryentry{kl}
{
    type=algorithm,
    name={\ensuremath{D}},
    description={Relative entropy between two distributions},
}
\newglossaryentry{Leastsquare}
{
    type=algorithm,
    name={\ensuremath{\LS}},
    description={Least-squares estimator with observed features $X$ and rewards $Y$. $\LS(X,Y) = (X^\top X)^{-1} X^\top Y $},
}
\newglossaryentry{design}
{
    type=algorithm,
    name={\ensuremath{X^{\top}X}},
    description={The design matrix with the oberving features $X \in \RR^{n \times d}$},
}
\newglossaryentry{setofinstant}
{
    type=algorithm,
    name={\ensuremath{\mathcal{T}}},
    description={Set of instant $\mathcal{T} \subset \mathbb{N}$},
}
\newglossaryentry{subsetfeature}
{
    type=algorithm,
    name={\ensuremath{X_{\mathcal{T}}}},
    description={Submatrix of the features of the reward $X_{\mathcal{T}} := (x_{t})_{t \in \mathcal{T}} \in \mathbb{R}^{|\mathcal{T}| \times d }$},
}
\newglossaryentry{subsetreward}
{
    type=algorithm,
    name={\ensuremath{Y_{\mathcal{T}}}},
    description={Subvector of the reward $Y_{\mathcal{T}} := (y_{t})_{t \in \mathcal{T}} \in \mathbb{R}^{|\mathcal{T}|}$},
}
\newglossaryentry{subsetnoise}
{
    type=algorithm,
    name={\ensuremath{Z_{\mathcal{T}}}},
    description={Subvector of the noise $Z_{\mathcal{T}} := (z_{t})_{t \in \mathcal{T}} \in \mathbb{R}^{|\mathcal{T}|}$},
}
\newglossaryentry{hattheta_i}
{
    type=algorithm,
    name={\ensuremath{\hat{\theta}_i}},
    description={The Least square estimation of the parameter $\theta$ at the end of phase $i$},
}
\newglossaryentry{alpha}
{
    type=algorithm,
    name={\ensuremath{\alpha}},
    description={Parameter used in the warm-up phase},
}
\newglossaryentry{hati}
{
    type=algorithm,
    name={\ensuremath{\hati}},
    description={Phase when the warm-up phase ends},
}
\newglossaryentry{hatB}
{
    type=algorithm,
    name={\ensuremath{\hat{B}}},
    description={The estimation of $\|\theta\|_A$ at the end of the warm-up phase, $\hat{B} = \|\hat{\theta}_{\hati}\|_A$},
}
\newglossaryentry{deltai}
{
    type=algorithm,
    name={\ensuremath{\delta_i}},
    description={Probability of failure of the $i$-th phase of the warm-up procedure},
}
\newglossaryentry{ni}
{
    type=algorithm,
    name={\ensuremath{n_i}},
    description={Duration of the $i$-th phase of the warm-up procedure},
}
\newglossaryentry{Ti}
{
    type=algorithm,
    name={\ensuremath{T_i}},
    description={Number of rounds after the $i$-th phase of the warm-up procedure},
}
\newglossaryentry{chiconf}
{
    type=algorithm,
    name={\ensuremath{U(\delta_i, n_i)}},
    description={Confidence bound of the estimated norm distribution with probability $1-\delta_i$ and $n_i$ samples. Related to the $\chi^2$ law.},
}
\newglossaryentry{Nexplore}
{
    type=algorithm,
    name={\ensuremath{N_e}},
    description={Number of exploration round during the exploration phase},
}
\newglossaryentry{hattheta}
{
    type=algorithm,
    name={\ensuremath{\hat{\theta}}},
    description={The least-squares estimation of the parameter $\theta$ at the beginning of the commit phase},
}
\newglossaryentry{Econdiexplor}
{
    type=algorithm,
    name={\ensuremath{\E_\expl}},
    description={Expectation conditionally on all rounds before phase 3},
}
\newglossaryentry{Ct}
{
    type=algorithm,
    name={\ensuremath{\mathcal{C}_t}},
    description={Confidence ellipsoid at time $t$ of the parameter $\theta$ in optimistic algorithms},
}
\newglossaryentry{logbar}
{
    type=algorithm,
    name={\ensuremath{\overline \log}},
    description={$\overline \log(x) = 1+\log(\max(1,x))$},
}
\newglossaryentry{ETC}
{
    type=abbreviation,
    name={\algoname{ETC}},
    description={\algoname{Explore-Then-Commit}, Type of algorithm that first explores the environment then commits to the best actions},
}
\newglossaryentry{nous}
{
    type=abbreviation,
    name={\nous},
    description={\algoname{Explore-Explore-Then-Commit}, The locally asymptotic minimax algorithm introduced in this paper},
}
\newglossaryentry{CB}
{
    type=abbreviation,
    name={\algoname{CB}},
    description={\algoname{ConfidenceBall}, Optimistic based algorithm of \cite{dani_stochastic_2008}},
}
\newglossaryentry{PEGE}
{
    type=abbreviation,
    name={\algoname{PEGE}},
    description={\algoname{Phased Exploration and Greedy Exploitation} Nearly ETC like algorithm of \cite{rusmi_linearly_2010}},
}
\newglossaryentry{UE}
{
    type=abbreviation,
    name={\algoname{UE}},
    description={\algoname{Uncertainty
    Ellipsoid}, Optimistic algorithm of \cite{rusmi_linearly_2010}},
}
\newglossaryentry{OFUL}
{
    type=abbreviation,
    name={\algoname{OFUL}},
    description={\algoname{Optimism in the Face of Uncertainty Linear bandit}, Optimistic algorithm of \cite{abbasi2011improved}},
}
\newglossaryentry{OLSOFUL}
{
    type=abbreviation,
    name={\algoname{OLSOFUL}},
    description={\algoname{Ordinary Least Squares OFUL}, Optimistic algorithm of \cite{gales2022norm-agn}},
}
\newglossaryentry{TS}
{
    type=abbreviation,
    name={\algoname{TS}},
    description={\algoname{Thompson Sampling}, Sampling based algorithm of \cite{abeille_2017}},
}
\title{Linear Bandits on Ellipsoids:\\ Minimax Optimal Algorithms}
\author[1]{Raymond Zhang$\phantom{}^*$}
\author[1]{Hédi Hadiji$\phantom{}^*$}
\author[1]{Richard Combes}
\affil[1]{\centering Laboratoire des signaux et systèmes,
Univ. Paris-Saclay,
CNRS, 
CentraleSupélec 
\vspace{1em}
} 
\date{\vspace{-1em}\today}
\begin{document}
\maketitle
\begin{abstract}
We consider linear stochastic bandits where the set of actions is an ellipsoid. 
We provide the first known minimax optimal algorithm for this problem. 
We first derive a novel information-theoretic lower bound on the regret of any
algorithm, which must be at least $\Omega(\min(d \sigma \sqrt{T} + d \|\theta\|_{A},
\|\theta\|_{A} T))$ where $d$ is the dimension, $T$ the time horizon, $\sigma^2$ the
noise variance, $A$ a matrix defining the set of actions and $\theta$ the vector of
unknown parameters.
We then provide an algorithm whose regret matches this bound to a multiplicative
universal constant.
The algorithm is non-classical in the sense that it is not optimistic, and it is not a
sampling algorithm.
The main idea is to combine a novel sequential procedure to estimate $\|\theta\|$,
followed by an explore-and-commit strategy informed by this estimate.
The algorithm is highly computationally efficient, and a run requires only time $O(dT +
d^2 \log(T/d) + d^3)$ and memory $O(d^2)$, in contrast with known optimistic
algorithms, which are not implementable in polynomial time.
We go beyond minimax optimality and show that our algorithm is locally asymptotically
minimax optimal, a much stronger notion of optimality.
We further provide numerical experiments to illustrate our theoretical findings.
\end{abstract}

\section{Introduction}\label{sec:Introduction}
\subsection{Model}\label{ssec:Model}
We consider the problem of stochastic linear bandits over ellipsoids with independent subgaussian rewards. 
Time is discrete with a horizon $\gls{T}$, and at time $t \in \gls{setT} = \{1,..,T\}$ a learner selects an action  $\gls{actiont} \in \mathcal{X}$ where action set $\mathcal{X}$ is a subset of $\mathbb{R}^{\gls{dim}}$ initially known to the learner. 
The action set $\mathcal{X}$ is an ellipsoid described by
\begin{align*}
	\gls{actionset} := \{ x \in \mathbb{R}^d: \|x-c\|_{A^{-1}} \le 1 \}
\end{align*}
with vector $\gls{center} \in \mathbb{R}^d$, positive definite matrix $\gls{A} \in \mathbb{R}^{d \times d}$ and $\gls{norm} := \sqrt{u^\top M u}$ for $u \in \mathbb{R}^d$ is the norm associated to positive definite matrix $M$.
After choosing action $x_t$ the learner observes a scalar reward $\gls{rewardt} = x_t^\top \theta + z_t$ where the vector $\gls{theta} \in \mathbb{R}^d$ is initially unknown to the learner, and $\gls{noiset} \in \mathbb{R}$ is additive noise.  
Random variables $(z_t)_{t \in \mathbb{N}}$ are assumed independent, centered and
subgaussian with variance proxy $\gls{noiselvl}^2$ so that conditionally on past observations
\begin{align*}
	\mathbb{E}\left(  \exp\left(\lambda z_t\right)\right) \le \exp\left(\frac{\lambda^2\sigma^2}{2}\right) \text{ for all } t \in \mathbb{N} \text{ and } \lambda \in \mathbb{R}
  \,.
\end{align*}
The learner selects action $x_t$ solely based on the observed rewards up to time $t$, that is, $y_1,...,y_{t-1}$.
The goal of the learner is to minimize the (expected) regret
\begin{align*}
	\gls{avregret} := \mathbb{E} \left[ \sum_{t=1}^{T} x^\star(\theta)^\top \theta - x_t^\top \theta \right]  \text{ where } \gls{xstar} \in \arg\max_{x \in \mathcal{X}} x^\top \theta
\end{align*}
which is the expectation of the difference between the rewards obtained by the learner and the reward obtained by an oracle who knows $\theta$ and hence always select the decision with the largest expected reward. 
From the Karush-Kush-Tucker conditions, $A^{-1} (x^\star(\theta)-c)$ should be proportional to $\theta$, so that the decision chosen by the oracle is $x^\star(\theta) = c + A\theta/\|\theta\|_{A}$.

For two functions $f,g$ of $\phi = (T,\theta,
\mathcal{X},d,\sigma)$, we write $f(\phi) = \gls{upper}(g(\phi))$ and $f(\phi) = \gls{lower}(g(\phi))$
if there exists $c_1,c_2$ strictly positive \emph{universal constants} (not allowed to depend on any parameter such as $\phi$) such that $f(\phi) \le c_1
g(\phi)$ and $f(\phi) \ge c_2 g(\phi)$ respectively, for all $\phi$.
We do not use the $\gls{forbidO}$ notation in which logarithmic terms are hidden, to avoid confusion. 

\subsection{Our contribution}\label{ssec:Our Contribution}

We make two main contributions to this problem. First we derive a novel
information-theoretic lower bound on the regret of any algorithm, stating that for any
algorithm and any $\gls{normtheta} \ge 0$
\begin{align*}
	\max_{\theta \in \mathcal{E}_{A}(B)} R_T(\theta) 
  = \Omega\left( \min(B T , d \sigma \sqrt{T})  \right)
  \text{ with } 
  \gls{thetaset}  := \{ \theta \in \mathbb{R}^d: \|\theta\|_{A} = B \}
  \,.
\end{align*}
In fact we prove an even stronger result showing that the same hold true when $\mathcal{E}_A(B)$ is replaced by a neighborhood of $\theta$ whose size vanishes with $T$. 
Second, we propose a novel algorithm called~\gls{nous}~(\algoname{Explore-Explore-Then-Commit}) matching this upper bound so that under~\nous 
\begin{align*}
	\max_{\theta \in \mathcal{E}_A(B)} R_T(\theta) 
  = O\left( \min(B T , d \sigma \sqrt{T} + dB)  \right)
  \,.
\end{align*}
Algorithm~\nous~has low computational complexity (time $O(d T + d^2 \log(T/d) + d^3)$ and memory
$O(d^2)$) so it can be applied to high dimensions, and is not based on optimism or
sampling.
Rather, it combines a non-trivial, adaptive initialization procedure to estimate the
norm of $\theta$ followed by an Explore-Then-Commit (\gls{ETC}) strategy.
We emphasize that initially the learner has no information about the norm of $\theta$.
In fact not only do our results show that~\nous~is minimax optimal with low computational complexity, but they also show that~\nous~is locally asymptotically minimax optimal, a much stronger notion of optimality
than minimax optimality, further discussed next section.

\subsection{Locally asymptotic minimax optimality}
\label{ssec:Locally asymptotic minimax optimality}

\begin{figure}[!htbp]
    \centering    
    \subfloat[Minimax optimality]{
      \includegraphics[width=0.4\textwidth]{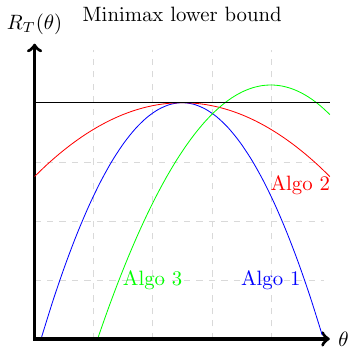}
      }
    \subfloat[Locally asymptotic minimax optimality]{
      \includegraphics[width=0.4\textwidth]{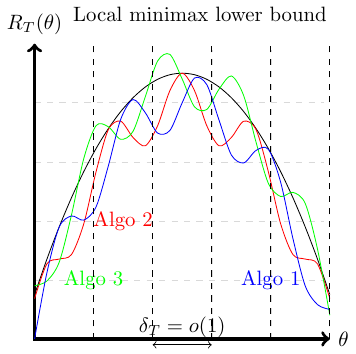}
      } 
    \caption{
      \scriptsize
      Schematic illustration of the differences between minimax  
      and locally asymptotically minimax lower bounds. \\
      Minimax lower bounds require that the regret curve of any algorithm goes above the
      bound at least once, whereas locally minimax lower bounds require that the curve of any
      algorithm goes above the bound \emph{in any region of size $\delta_T = o(1)$}. \\ In both figures, Algorithms 1 and 2 match the lower bound, 
       and Algorithm 3 does not.
    }
    \label{fig:minimax}
\end{figure}

We use a refined notion of optimality called "locally asymptotic minimax optimality"
proposed by \cite{hajek1972local} for estimation.
Denote by $R_T^\cA(\theta)$ the regret of algorithm $\gls{Algorithm}$ under parameter
$\theta \in \mathbb{R}^d$ and let $\mathcal{P} \subset \mathbb{R}^d$ the set of allowed
$\theta$.
Algorithm $\cA$ is minimax optimal if for any algorithm $\cA'$ there exists $\theta \in
\mathcal{P}$ such that with $R^{\cA'}_T(\theta) = \Omega( \max_{\theta \in \mathcal{P}}
R^\cA_T(\theta))$.
Algorithm $\cA$ is locally asymptotically minimax optimal if for any algorithm $\cA'$
there exists $\theta_T'$ such that $R^{\cA'}_T(\theta_T') = \Omega( R^{\cA}_T(\theta))$
and $\lim_{T \to \infty} \theta_T' = \theta$.
Local asymptotic minimax optimality, as illustrated in Figure~\ref{fig:minimax}, is
stronger than minimax optimality. Locally optimal algorithms perform
well in the worst case, but also adapt to the information-theoretic "local
difficulty" of the problem.

For instance when $\mathcal{X}$ is the unit ball, and the set of allowed $\theta$ is $\mathcal{P} = \{\theta \in \mathbb{R}^d: \| \theta \|_{2} \le 1 \}$ \cite{lattimore_bandit_2020} prove the minimax lower bound  $\max_{\theta \in \mathcal{P}} R_{T}(\theta) = \Omega(d \sqrt{T})$, but their analysis only involves values of $\theta$ very close to $0$.
Our analysis paints a more complex picture, where the difficulty depends on $\|\theta\|_{A}$; locally minimax optimal algorithms like~\nous~adapt to this.

\subsection{Related Work}\label{ssec:Related Work}

We now summarize the current state of the art on the problem of stochastic linear
bandits with i.i.d.\ subgaussian rewards.
For fair comparison, we first highlight the difference in assumptions made by
authors, and whenever those assumptions differ from ours, we specialize their results
to our setting.

\textbf{Different settings} \cite{dani_stochastic_2008} consider an arbitrary action set $\mathcal{X}$ and assume that $\max_{x \in \mathcal{X}} |\theta^\top x| \le 1$, which in our setting is equivalent to $|c^\top \theta + \|\theta\|_{A}| \le 1$. 
\cite{rusmi_linearly_2010} consider an arbitrary compact action set $\mathcal{X}$, but they assume intricate conditions on its curvature, see below. 
\cite{abbasi2011improved} consider an arbitrary, possibly time-dependent, action set $\mathcal{X}_t$ and assume $\max_{x \in \mathcal{X}_t} |\theta^\top x | \le 1$, which in our setting is equivalent to $|c^\top \theta + \|\theta\|_{A}| \le 1$. 
They further assume that an upper bound for $\|\theta\|_{2}$ is known and provided as an input to the algorithm. 
\cite{abeille_2017} consider an action set $\mathcal{X}$ which is an arbitrary closed subset of the unit ball, and also assume that an upper bound for $\|\theta\|_{2}$ is known and provided as an input to the algorithm. 

\textbf{Regret upper bounds } The two versions of the \gls{CB} (\algoname{ConfidenceBall}) algorithm of~\cite{dani_stochastic_2008} achieve regret upper bounds of $O( d \sqrt{ T \log(T)^{3} })$ and $O( d^{3/2} \sqrt{ T \log(T)^{3} })$ in our setting.
The \gls{PEGE} (\algoname{Phased Exploration and Greedy Exploitation}) algorithm of~\cite{rusmi_linearly_2010} achieves a regret of $O( ( \|\theta\|_{2} + 1/\|\theta\|_{2} ) d \sqrt{T})$ when $\mathcal{X}$ is a unit ball. 
The \gls{UE} (\algoname{Uncertainty
Ellipsoid})  algorithm of \cite{rusmi_linearly_2010} achieves a regret of $O( f(\mathcal{X}) d \sqrt{T \log(T)^3})$ when $\mathcal{X}$ is an ellipsoid where $f$ has an intricate dependency on the curvature of $\mathcal{X}$, and can only be bounded by a universal constant if the maximal ratio of eigenvalues of $A$, $|\gls{lambdamax}(A)/\sqrt{\gls{lambdamin}(A)}|$ is bounded by another universal constant, so that $\mathcal{X}$ is "close" to the unit ball.
The \gls{OFUL} (\algoname{Optimism in the Face of Uncertainty Linear bandit}) algorithm of~\cite{abbasi2011improved} achieves a regret $O( (\|\theta\|_{2} + \sigma \sqrt{d} \log(T) ) \sqrt{d T \log(T)})$ assuming that the algorithm is given an upper bound on $\|\theta\|_{2}$ as prior information in order to set the algorithm parameters correctly. 
The \gls{TS} (\algoname{Thompson Sampling}) algorithm of~\cite{abeille_2017} achieves a regret of $O( \sigma d^{3/2}\sqrt{T \log(dT)^3 })$ assuming that the algorithm is given an upper bound on $\|\theta\|_{2}$ as prior information in order to set the algorithm parameters correctly. 
The \gls{OLSOFUL} (\algoname{Ordinary Least Squares OFUL}) algorithm of~\cite{gales2022norm-agn} achieves a regret of $O( \sigma d \sqrt{ T \log(T/d)^2 } + \|\theta\|_{2}d \log(d) )$ without prior information on $\theta$. 
Therefore, the two versions of \algoname{CB}, \algoname{PEGE}, \algoname{UE}, \algoname{OFUL}, \algoname{OLSOFUL},  \algoname{TS} are not provably minimax optimal, in contrast with~\nous.

\textbf{Regret lower bounds } \cite{rusmi_linearly_2010} show that the minimax regret of any algorithm is at least $\Omega(d \sqrt{T})$ when the set of actions is the unit ball, using a Bayesian approach. 
\cite{lattimore_bandit_2020} show the same lower bound in a more general setting, using an Assouad-style construction. 
Our locally asymptotically minimax lower bound is a stronger result. with the correct
dependency on $d, \sigma, T$ and $\|\theta\|_{A}$. 

\textbf{Computational complexity} To the best of our knowledge, even when $\mathcal{X}$ is a unit ball, none of the optimistic algorithms such as \algoname{CB}, \algoname{UE}, \algoname{OFUL} and \algoname{OLSOFUL} cannot be implemented in polynomial time, because they must repeatedly maximize the bilinear function $x^\top \theta$ over the couple  $(x,\theta) \in \mathcal{X} \times \mathcal{C}_t$ where \gls{Ct} is the confidence ellipsoid computed at time $t$. 
To the best of our knowledge, no efficient algorithm is known to solve this bilinear optimitzation problem when $\mathcal{X}$ is an ellipsoid. 
The only case in which this is feasible in polynomial time is when $\mathcal{X}$ is finite with polynomial size in $d$. 
This is corroborated by our numerical experiments. 
Both \algoname{PEGE}, \algoname{TS} can be implemented in polynomial time, as they require to minimize a linear function over $\mathcal{X}$, and/or to compute the least-squares estimate of $\theta$. 
Our results show that there is no regret/complexity trade-off in linear bandits, in the sense that~\nous~achieves both minimax regret, and low computational complexity.  

\textbf{Related problems} 
Other authors have considered different but related problems in linear bandits, for
instance: \cite{zhu2022pareto} consider the case where $\theta$ is sparse,
\citet{banerjee2022exploration} lower bound asymptotically the eigenvalues of the
expected design matrix, \cite{jun2024noise} consider the case where the variance of the
reward from the optimal decision is small, and \cite{lattimore_bandit_2020} consider
the adversarial case (we refer the reader to their discussion of the major differences
between stochastic and adversarial linear bandits).

\section{Regret lower bound}\label{sec:Regret lower bound}
Our first main result is an information-theoretic lower bound on the regret of any
algorithm on ellipsoids, presented in Theorem~\ref{theorem:lower_bound}.
 Our result is a locally asymptotically minimax lower bound (stronger than a minimax
  lower bound) indeed, since given any $\theta$, there must
exist a $\theta'$ in the vicinity of $\theta$ on which regret is larger than the lower
bound, with $\|\theta - \theta'\| \to 0$ when $T$ is large. The lower bound is a
minimum of two terms, the first proportional to $d \sigma \sqrt{T}$, and the second to
$T \|\theta\|_{A}$, which depends on the norm of $\theta$. Those two terms identify two
"regimes" depending on $\|\theta\|_{A}$, suggesting that the correct measure of problem
complexity is $\|\theta\|_{A}$ which captures both the magnitude and the orientation of
$\theta$, as well as the curvature of the action set $\mathcal{X}$. 

Similarly to the minimax lower bound for the ball (Thm 24.2 in
\citet{lattimore_bandit_2020}), our proof is built around an Assouad-like construction
where we define $2^d$ alternative mean-reward vectors $\theta(\xi)$, with $\xi \in
\{-1,+1\}^d$, and argue that no algorithm can have low regret on all such parameters. 
We show our stronger result by enforcing that the alternative vectors be in the vicinity
of the given $\theta$, and of the same norm. This comes with technical difficulty, as we
need to lower bound the regret of playing actions that are close to optimal. We also took
extra care to exhibit clearly the different regimes involving $T, \sigma, B$ and $d$.
\begin{theorem}\label{theorem:lower_bound}
	For any algorithm, any $B > 0$, any $T \geq 1$ and any 
	$
	\theta \in \mathcal{E}_{A}(B) 
	= \left\{ \theta \in \mathbb{R}^{d+2}, \|\theta \|_{A} = B\right\}
	$ 
	there exists a bandit problem with mean-rewards vector 
	$\theta' \in \mathcal{E}_{A}(B)$ in a neighbourhood of $\theta$ 
	such that 
	$
	\|\theta - \theta'\|^2_A
	\le \min (\sigma dB /\sqrt{T} \, , 4 B^2)
	$ and :
	\begin{align*}
		R_T(\theta') \ge  
		\min \bigg( \frac{ \sigma d \sqrt{T}}{16} , \frac{B T}{4}\bigg) \,.
	\end{align*}
\end{theorem}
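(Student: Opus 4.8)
The plan is to reduce the ellipsoid problem to a linear bandit on the unit ball and then run an Assouad-type argument on the sphere of radius $B$. Writing $w_t = A^{-1/2}(x_t-c)$ and $\mu = A^{1/2}\theta$, the constraint $x_t\in\mathcal X$ becomes $\|w_t\|_2\le1$, the reward is $y_t = w_t^\top\mu + c^\top\theta + z_t$ (the constant $c^\top\theta$ is irrelevant), and $\|\mu\|_2 = \|\theta\|_A = B$. The instantaneous regret factorizes as
\[
  x^\star(\theta)^\top\theta - x_t^\top\theta \;=\; B - w_t^\top\mu \;=\; \tfrac{B}{2}\,\|w^\star - w_t\|_2^2 + \tfrac{B}{2}\,(1-\|w_t\|_2^2)\,,
\]
with $w^\star = \mu/B$, so that $R_T(\theta)\ge \tfrac{B}{2}\,\E\sum_{t}\|w^\star-w_t\|_2^2$. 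Crucially $\|\theta-\theta'\|_A = \|\mu-\mu'\|_2$, so the required neighbourhood condition on $\theta'$ translates verbatim into a Euclidean neighbourhood condition on $\mu'$, and the ball problem is rotation-invariant, so I may assume $\mu = B e_1$.

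Next I would build the confusing instances. Fixing a small $\epsilon$, set $\mu(\xi) = \sqrt{B^2-(d-1)\epsilon^2}\,e_1 + \epsilon\sum_{i\ge 2}\xi_i e_i$ for $\xi\in\{-1,+1\}^{d-1}$, which all lie exactly on the sphere of radius $B$ and satisfy $\|\mu(\xi)-Be_1\|_2^2 \le (d-1)\epsilon^2$; feasibility forces $\epsilon^2\le B^2/(d-1)$, a constraint that will eventually produce the small-$B$ regime. Projecting the regret onto the perturbed coordinates gives $R_T(\theta(\xi)) \ge \tfrac{B}{2}\sum_{i\ge 2}\E_{\mu(\xi)}\sum_{t}(w_{t,i}-\epsilon\xi_i/B)^2$. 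Here the optimal action has only the tiny component $\epsilon\xi_i/B$ in direction $i$, so any exploratory deviation is itself costly --- this is the technical point the introduction flags as lower bounding the regret of near-optimal actions.

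The heart of the argument is a per-direction information bound with stopping times. For each $i$ I would define $\tau_i = \min\{t:\sum_{s\le t}w_{s,i}^2\ge\gamma\}\wedge T$ with $\gamma\asymp\sigma^2/\epsilon^2$, chosen so that the relative entropy between the trajectory laws $P_{\theta(\xi),i}$ and $P_{\theta(\xi^{(-i)}),i}$ up to $\tau_i$, equal to $\tfrac{2\epsilon^2}{\sigma^2}\E\sum_{t\le\tau_i}w_{t,i}^2$, stays below a universal constant. Pairing $\xi$ with its flip $\xi^{(-i)}$ and applying the Bretagnolle--Huber inequality then shows the learner cannot reliably determine $\mathrm{sign}(\xi_i)$ before $\tau_i$. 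Combining this with the deterministic inequality $(w_{t,i}-\epsilon\xi_i/B)^2\ge \tfrac12 w_{t,i}^2-\epsilon^2/B^2$ yields, for the averaged directional regret $U_i$, a lower bound of order $\min(B\gamma,\ \epsilon^2 T/B)$: the first term is paid when direction $i$ is explored to threshold, the second when its sign is never learned over the whole horizon.

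Summing over the $d-1$ orthogonal directions gives $R_T \gtrsim d\,\min(B\sigma^2/\epsilon^2,\ \epsilon^2 T/B)$, which I would optimize by taking $\epsilon^2 = \min(B\sigma/\sqrt T,\ B^2/(d-1))$. The unconstrained balance $\epsilon^2\asymp B\sigma/\sqrt T$ yields the $d\sigma\sqrt T$ regime, while in the small-$B$ range the geometric cap $\epsilon^2 = B^2/(d-1)$ is active and yields the $BT$ regime, so the two terms combine into $\min(d\sigma\sqrt T, BT)$; the same choice makes $(d-1)\epsilon^2 = \|\mu(\xi)-Be_1\|_2^2$ of order $\min(dB\sigma/\sqrt T,\ B^2)$, matching the stated neighbourhood bound. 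The remaining work is bookkeeping of the universal constants to reach $1/16$ and $1/4$. I expect the main obstacle to be the change-of-measure step: making the Bretagnolle--Huber argument rigorous for an unbounded, stopping-time-dependent functional and for subgaussian (rather than Gaussian) noise, while simultaneously enforcing that every alternative $\mu(\xi)$ has exactly norm $B$ and stays in the prescribed neighbourhood.
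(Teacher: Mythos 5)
Your overall architecture matches the paper's proof almost exactly: reparametrize the ellipsoid to the unit ball, place a hypercube of confusing parameters on the sphere of radius $B$ clustered around the true direction, lower bound the instantaneous regret by $\frac{B}{2}\|w^\star - w_t\|_2^2$, control per-direction information with stopping times $\tau_i$ keyed to $\sum_{s\le t} w_{s,i}^2$, pair $\xi$ with $\xi^{(-i)}$, and tune $\epsilon$ to interpolate between the $d\sigma\sqrt T$ and $BT$ regimes. The change-of-measure worries you flag at the end are non-issues: the paper simply takes Gaussian noise (legitimate for a lower bound, since Gaussian is subgaussian), and your directional functional $U_i$ is almost surely bounded precisely because of how $\tau_i$ is defined, which is exactly how the paper applies total variation plus Pinsker (Bretagnolle--Huber would do equally well).

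The genuine gap is your claim that the intercept $c^\top\theta$ is irrelevant. It cancels in the regret, but not in the likelihoods: the learner observes $y_t = (w_t + A^{-1/2}c)^\top\mu + z_t$, so the per-round KL between instances $\xi$ and $\xi^{(-i)}$ is proportional to $\epsilon^2\big(w_{t,i} + (A^{-1/2}c)^\top e_i\big)^2/\sigma^2$, not $\epsilon^2 w_{t,i}^2/\sigma^2$. Whenever a perturbation direction $e_i$ carries a nonzero component of $A^{-1/2}c$, the learner acquires information about the sign of $\xi_i$ at a fixed positive rate \emph{for free}, i.e., with no regret incurred in direction $i$; your stopping time, which only accumulates $w_{s,i}^2$, then fails to control the relative entropy, and the indistinguishability argument collapses (badly so when $\|c\|$ is large). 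Since you perturb all $d-1$ directions orthogonal to $\mu$ in $\R^d$, you cannot avoid this unless $A^{-1/2}c$ happens to be parallel to $\mu$. This is exactly why the paper works in ambient dimension $d+2$ and constructs an orthonormal basis with $e_{d+1} \propto A^{-1/2}\theta$ and $\Span(e_{d+1}, e_{d+2}) = \Span(A^{-1/2}\theta, A^{-1/2}c)$: the $d$ perturbed coordinates are then orthogonal to $A^{-1/2}c$, the identity $e_i^\top A^{-1/2}c = 0$ makes the intercepts of all confusing instances coincide, and the KL reduces to the pure $w_{t,i}$ term. Your sketch is essentially correct in the centered case $c=0$ (up to using $d-1$ rather than $d$ perturbation directions and the attendant constants), but as written it does not prove the theorem for general ellipsoids; you need to reserve an extra dimension for the center, as the paper does.
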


\noindent \textbf{Proof} \;
Let us set $B>0$ and $\theta \in \R^{d+2}$ such that $\|\theta\|_{A} = B$, and fix any 
algorihm.
We shall consider problems with gaussian rewards of variance $\sigma^2$ which are in
particular subgaussian with proxy variance $\sigma^2$.
For all $t$, define $\gls{xrepa} := A^{-1/2}(x_t - c)$, so that $x_t = A^{1/2} w_t + c$ 
and $w_t^\top w_t \le 1$.
The expected reward of action $x_t$ is then
$
\theta^\top x_t = ( A^{1 / 2} \theta)^\top w_t + \theta^\top c
$ 
and the reward of the optimal action is $B + \theta^\top c$ with $B = \sqrt{
\theta^\top A \theta }$. 
The difference between the two equals $B - ( A^{1/2} \theta )^\top w_t$.
 We work with 
 $(e_i)_{1 \le i \le d+2}$ an orthonormal basis of $\mathbb{R}^{d+2}$ 
 such that 
 $e_{d+1}  \propto A^{-1/2}\theta$ 
 and 
 $\Span(e_{d+1}, e_{d+2}) = \Span(A^{-1/2}\theta, A^{-1/2}c)$.
We also set $\gls{constKL} := \min(\nicefrac{1}{2\sqrt{2}}, \nicefrac{B\sqrt{2T}}{\sigma d})$ and
$\gls{epsilondist} := \min(1, \nicefrac{d\sigma C}{B\sqrt{2T}})$. 
Given $\gls{confusingvec} \in \{-1,+1\}^d $ consider the family of problems defined by the mean-reward
vectors
\begin{align*}
	\gls{confusetheta} := A^{-\frac{1}{2}} \phi(\xi) \text{ and } 
	\gls{confusingfunc}(\xi) := B\left[ 
		 e_{d+1} \sqrt{1 - \epsilon^2} 
		 + \frac{\epsilon}{\sqrt{d}}\sum_{i=1}^{d} \xi_i e_i\right]
	\,.
\end{align*}
 Moreover, $\theta(\xi)^\top A \theta(\xi) = \phi(\xi)^\top \phi(\xi) = B^2$.   
By inspection, the regret at time step $t$ satisfies 
\begin{multline*}
	B - ( A^{\frac{1}{2} } \theta(\xi)  )^\top w_t  
	= B - \phi(\xi)^\top w_t 
	\stackrel{(1)}{=} \frac{B}{2} \bigg[ 
		1 - w_t^\top w_t + \Big\| \frac{\phi(\xi)}{B} - w_t\Big\|_2^2 
		\bigg] \\ 
	\stackrel{(2)}{\ge}  \frac{B}{2} \Big\| \frac{\phi(\xi)}{B} - w_t\Big\|_2^2 
	=  \frac{B}{2}  \bigg( 
		(w_{t,d+1} - \sqrt{1 - \epsilon^2} )^2 
		+  \sum_{i=1}^{d} \Big(w_{t,i} - \xi_i \frac{ \epsilon}{\sqrt{d}} \Big)^2 
		+ w_{t,d+2}^2 \bigg) 
  \,, 
\end{multline*} 
using the facts that (1) $ - a^\top b  = \frac{1}{2}(\|a-b\|^2 - a^\top a -b^\top b)$ and  (2) $w_t^\top w_t \le 1$. 
Summing over $t$ we get a lower bound for the regret:
\begin{align*}
	R_T(\theta) 
	&= \sum_{t=1}^{T}  \left[ B - \phi(\xi)^\top w_t \right] 
	\ge  \frac{B}{2} \sum_{t=1}^{T} \left[  \left(w_{t,d+1} - \sqrt{1 - \epsilon^2} \right)^2 +  \sum_{i=1}^{d} \left(w_{t,i} - \xi_i \frac{ \epsilon}{\sqrt{d}} \right)^2 + w_{t,d+2}^2  \right]\\
	&\ge  \frac{B}{2}   \sum_{i=1}^{d} \sum_{t=1}^{T} \left(w_{t,i} - \xi_i \frac{ \epsilon}{\sqrt{d}} \right)^2  
	\ge  \frac{B}{2}   \sum_{i=1}^{d} U_i(\xi_i)
	\hspace{3cm} \text{ where } \\
	\gls{directstoptime}(T)& := \min\bigg(  
		\min \bigg\{
			t: \sum_{s=1}^{t} w_{s,i}^2 \ge
			 \frac{d  \sigma^2 C^2}{2 \epsilon^2 B^2} 
			 \bigg\}, T 
		\bigg)
	 \text{ and } 
	 \gls{directregret}(\xi) := \sum_{t=1}^{\tau_i(T)} \left(w_{t,i} - \xi_i \frac{ \epsilon}{\sqrt{d}} \right)^2 
	 \,.
\end{align*} 
This is upper bounded with probability $1$ by:
\begin{align*}
	U_i(\xi) 
	&\le  2 \sum_{t=1}^{\tau_i(T)} w_{ti}^2 +  \frac{ \epsilon^2}{d} 
	\le  2 \left[\frac{d\sigma^2 C^2 }{2 \epsilon^2 B^2}  +   \frac{ T\epsilon^2}{d}\right]
  \,,
\end{align*}
using the Cauchy-Schwarz inequality, $(a-b)^2 \le 2(a^2 + b^2)$ and the definition of $\tau_i(T)$. For all $i \in [d]$ define $\gls{flipcoordinate}$ with coordinates 
$
\xi^{(-i)}_j := (-1)^{ \mathbbm{1}(i=j) } \xi_j 
$ 
and define $\gls{probaithetadistib}$ the probability distribution of $(w_t)_{t \le \tau_i(T)}$
under parameter $\theta(\xi)$. From the choice of $\theta(\xi)$ we have
\begin{align*}
	\theta(\xi)^\top x_t &-  \theta(\xi^{(-i)})^\top x_t
	= (\phi(\xi)- \phi(\xi^{(-i)}))^\top w_t + (\theta(\xi) - \theta(\xi^{(-i)}))^\top c \\
	&= (\phi(\xi)- \phi(\xi^{(-i)}))^\top w_t + 2 \frac{\epsilon}{\sqrt{d}}  e_i^\top A^{-\frac{1}{2}} c
	= (\phi(\xi)- \phi(\xi^{(-i)}))^\top w_t 
	= \frac{2 \epsilon B}{\sqrt{d}}w_{t,i},
\end{align*}
as $e_i^\top A^{-1/2} c = 0$ for all $i \leq d$.
By \cite{lattimore_bandit_2020}[Exercises~14.7 and~15.8], the relative entropy
between the distributions of the observations in both problems bounded by
\begin{align*}
	\gls{kl}\big( P_{ \theta(\xi),i} &|  P_{ \theta(\xi^{(-i)}),i}  \big) 
	=\frac{1}{2\sigma ^2} \mathbb{E} \bigg( 
		\sum_{s=1}^{\tau_i(T)} 
		  \left( \theta(\xi)^\top x_t -  \theta(\xi^{(-i)})^\top x_t  \right)^2
		 \bigg) \\
	&= \frac{1}{2\sigma ^2} \mathbb{E} \bigg( 
		\sum_{s=1}^{\tau_i(T)} 
		\left( \phi(\xi)^\top w_t -  \phi(\xi^{(-i)})^\top w_t  \right)^2 
		\bigg) 
	=  \mathbb{E} \bigg( 
		\sum_{s=1}^{\tau_i(T)}w _{si}^2 
		\bigg) 
		\frac{2 \epsilon^2 B^2 }{d\sigma^2}
	\le C^2 
	\,.
\end{align*}
Then because $U_i(\xi)$ is a function of $(w_t)_{t \le \tau_i(T)}$ and is bounded by
$ 2 \big(\frac{d\sigma^2 C^2 }{2 \epsilon^2  B^2}  +   \frac{ T\epsilon^2}{d}\big)$, we can use a total variation and Pinsker's inequalities to get that:
\begin{align*}
	\mathbb{E}_{\theta(\xi)}(	U_i(\xi) ) 
	&\ge   \mathbb{E}_{\theta(\xi^{(-i)})}(	U_i(\xi) ) - 2 \left[\frac{d\sigma^2 C^2 }{2 \epsilon^2 B^2}  +   \frac{ T\epsilon^2}{d}\right] \sqrt{\frac{1}{2} D( P_{ \theta(\xi),i} |  P_{ \theta(\xi^{(-i)}),i}  ) } \\
	&\ge   \mathbb{E}_{\theta(\xi^{(-i)})}(	U_i(\xi) ) -   \sqrt{2} \left[\frac{d \sigma^2 C^2 }{2 \epsilon^2 B^2}  +   \frac{ T\epsilon^2}{d}\right] C 
\,.
\end{align*}
Adding on both sides and using the definition of $U$:
\begin{align*}
	\mathbb{E}_{\theta(\xi)}(	U_i(\xi_i) ) + \mathbb{E}_{\theta(\xi^{(-i)})}(	U_i(\xi^{(-i)}) ) 
	&\ge \mathbb{E}_{ \theta(\xi)} ( U_i(\xi)+ U_i(\xi^{(-i)}) ) -  
	\sqrt{2} \left[
		\frac{d\sigma^2 C^2 }{2 \epsilon^2 B^2}  
	+   \frac{ T\epsilon^2}{d}
	\right] C \\
	&=2 \mathbb{E}_{ \theta(\xi)}\bigg( 
		\sum_{t=1}^{\tau_i(T)} \frac{\epsilon^2}{d} + w_{ti}^2
		 \bigg) 
	- \sqrt{2} \left[\frac{d\sigma^2 C^2 }{2 \epsilon^2 B^2} 
	+   \frac{ T\epsilon^2}{d}
	\right] C \,.
\end{align*}
Either $\tau_i(T) = T$ so $\sum_{t=1}^{\tau_i(T)} \frac{\epsilon^2}{d} =
\frac{T\epsilon^2}{d}$, or $\tau_i(T) < T$ and then $ \sum_{t=1}^{\tau_i(T)} w_{ti}^2
\ge \frac{d\sigma^2C^2}{2 \epsilon^2 B^2}$. Therefore
\begin{align*}
	\mathbb{E}_{\theta(\xi)}(	U_i(\xi) ) + \mathbb{E}_{\theta(\xi^{(-i)})}(	U_i(\xi^{(-i)}) ) 
	&\ge \min \left(  2 T \frac{\epsilon^2}{d} , \frac{2d\sigma^2C^2}{ 2 \epsilon^2 B^2}\right) - \sqrt{2} \left[\frac{d\sigma^2 C^2 }{2 \epsilon^2 B^2}  +   \frac{ T\epsilon^2}{d}\right] C
  \,.
\end{align*}
If $d < 4B\sqrt{T} / \sigma $, then $C = 2\sqrt2 /2$ and 
$\epsilon^2 = d\sigma/( 4 B \sqrt{T}) < 1$ so
\begin{align*}
	\mathbb{E}_{\theta(\xi)}(	U_i(\xi) ) + \mathbb{E}_{\theta(\xi^{-i})}(	U_i(\xi^{(-i)}) )  
	&\ge \frac{ \sigma \sqrt{T}}{4 B} 
  \,.
\end{align*}
If $d \geq 4B \sqrt{T} /\sigma$, then $C = \sqrt{2T}B / (\sigma d)$
and $\epsilon^2 = 1$. So
$
2T / d
\leq d\sigma^2 C^2 /  B^2
$ and
\begin{align*}
	\mathbb{E}_{\theta(\xi)}(	U_i(\xi) ) + \mathbb{E}_{\theta(\xi^{(-i)})}(	U_i(\xi^{(-i)}) ) 
	&\ge     \frac{2T}{d} -  \sqrt{2} \frac{d \sigma^2 C^3 }{2 B^2} 
	\ge     \frac{2T}{d} -  \sqrt{2} \frac{(2T)^{3/2} B }{2 \sigma d^2} 
	\ge    \frac{T}{d}
  \,.
\end{align*}
Putting both cases together and summing the regret over $\xi$: 
\begin{align*}
	\sum_{\xi \in \{-1,+1\}^d}&  \mathbb{E}_{ \theta(\xi) }  ( R_T(\theta) ) 
	\ge  \frac{B}{2} \sum_{\xi \in \{-1,+1\}^d} \sum_{i=1}^{d} \mathbb{E}_{ \theta(\xi) }( U_i(\xi) )\\
	& \ge  \frac{B}{4} \sum_{i=1}^{d} \sum_{\xi \in \{-1,+1\}^d}  
	\mathbb{E}_{ \theta(\xi) }( U_i(\xi) ) + \mathbb{E}_{ \theta(\xi^{(-i)}) }( U_i(\xi^{(-i)}))
     \ge 2^d \min \bigg(  \frac{ \sigma d\sqrt{T}}{16} , \frac{TB}{4}\bigg)
 \,.
\end{align*}
We conclude the proof by saying that there must exist at least a $\xi$ such that
\begin{align*}
	\mathbb{E}_{ \theta(\xi) }  ( R_T(\theta)  )
 &\ge \min\bigg(   \frac{ \sigma d\sqrt{T}}{16}  , \frac{TB}{4}\bigg) \qedhere
\end{align*}
To complete the picture, we also show that there is a minimal regret of $\Omega(d
\|\theta\|_A)$ as soon as $T \geq d$. While this seems intuitively straightforward, as
the learner should have no choice but to explore all dimensions at least once, the
proof (in Appendix~\ref{app:lower_bound}) requires some technical work. 
\begin{proposition}
\label{prop:dBlower}
 For any algorithm, for any $B > 0$,  if $T\geq d$, there exists a noiseless 
 bandit problem, with mean-rewards parameter $\theta \in \R^d$, 
 such that $\|\theta\|_A \leq 4 B$ and
	\[
		R_T(\theta)	\geq 0.017 \, d \|\theta\|_A \,.
	\]	
\end{proposition}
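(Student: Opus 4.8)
The plan is to prove a minimal $\Omega(d\|\theta\|_A)$ regret in the \emph{noiseless} setting by exhibiting a family of parameters on which any algorithm must, in effect, ``waste'' a constant fraction of a round in at least a linear-in-$d$ number of directions before it can identify the optimal action. Since there is no noise, the learner gains exact information from each action, so the lower bound cannot come from statistical indistinguishability (as in Theorem~\ref{theorem:lower_bound}); instead it must be a purely geometric/combinatorial argument about how many near-orthogonal directions the learner is forced to probe. I would again reparametrize via $w_t = A^{-1/2}(x_t-c)$ with $\|w_t\|_2 \le 1$, so that the regret contribution at time $t$ is $\tfrac{B}{2}\|\phi(\theta)/B - w_t\|_2^2$ after dropping the $(1 - \|w_t\|_2^2)$ term exactly as in the main theorem, and work in the rotated coordinates so that the optimal direction is $\phi(\theta)/B$.

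The key idea is to use a counting/pigeonhole argument over a set of $\theta$'s whose optimal directions are spread out. First I would pick a hard instance by considering, for each coordinate $i \in [d]$, a parameter $\theta^{(i)}$ whose optimal $w^\star$ is (close to) the standard basis vector $e_i$, within the reparametrized ball. Because the rewards are noiseless and deterministic, the learner's action $x_t$ is a deterministic function of the past observed rewards; in particular, on the instance $\theta^{(i)}$, before the algorithm has ``seen'' direction $i$ it behaves identically to how it behaves on some reference instance. The plan is to lower-bound the per-step regret $\tfrac{B}{2}\|e_i - w_t\|_2^2$ at the first time the algorithm commits to a direction that is substantially aligned with $e_i$: any single action $w_t$ can be close to at most one of a suitably chosen set of well-separated targets, so summing $\tfrac{B}{2}\|e_i - w_t\|_2^2$ over the directions forces the total regret to be at least a constant times $d$ on at least one of the instances. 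This is where the hypothesis $T \ge d$ enters: the learner needs at least $d$ rounds even to point once in each of $d$ orthogonal directions, and the explicit constant $0.017$ will come from optimizing the separation parameter against the $\tfrac{B}{2}$ prefactor. The normalization $\|\theta\|_A \le 4B$ is arranged, as in Theorem~\ref{theorem:lower_bound}, by keeping all $\theta^{(i)}$ inside the norm ball $\mathcal{E}_A(B)$ up to a constant blow-up, using the embedding into $\R^{d+2}$ with the extra coordinate absorbing the center term $c$.

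I would make this rigorous by fixing the separated target directions $w^\star_1,\dots,w^\star_d$ (e.g.\ a constant fraction of the orthonormal $e_i$), defining $\theta^{(i)} = A^{-1/2}\phi^{(i)}$ with $\phi^{(i)}/B$ pointing at $w^\star_i$, and then averaging the regret over the $d$ instances. For a single algorithm run on instance $\theta^{(i)}$, let $t_i$ be the first round whose action is within distance $\rho$ of $w^\star_i$; before $t_i$ every played action incurs per-step regret at least $\tfrac{B}{2}\rho^2$ relative to target $i$, and the deterministic (noiseless) dynamics let me couple the trajectories across instances up to the point of first alignment. Summing $\tfrac{B}{2}\rho^2$ over the rounds and over $d$ separated directions, then invoking pigeonhole to find the single instance $\theta^{(i)}$ realizing the average, yields $R_T(\theta^{(i)}) \ge c\, d B$ for an explicit constant $c$, and choosing $\rho$ to balance the two competing effects gives $c = 0.017$.

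\textbf{Main obstacle.} The hard part will be the coupling/indistinguishability step in the noiseless regime: since observations are exact and deterministic, I cannot appeal to a KL or total-variation bound as in Theorem~\ref{theorem:lower_bound}, and must instead argue directly that the algorithm's behavior on instance $\theta^{(i)}$ cannot simultaneously align early with every $w^\star_i$ — this is fundamentally a covering/separation statement (a single $w_t$ lies near at most one well-separated target), combined with a careful deterministic coupling showing the learner has no shortcut to identify all directions before round $d$. Getting the explicit constant $0.017$ to come out cleanly, while keeping $\|\theta\|_A \le 4B$ under the $\R^{d+2}$ embedding, is the delicate bookkeeping I expect to consume most of the appendix proof.
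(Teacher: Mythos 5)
Your approach has a fatal flaw that your own ``main obstacle'' paragraph correctly identifies but underestimates: in the noiseless setting, no argument based on a \emph{finite} family of well-separated instances plus coupling and pigeonhole can work, because exact real-valued observations make identification essentially free. Concretely, take your hard family $\theta^{(i)} \approx B A^{-1/2} e_i$, $i \in [d]$. The learner can play a single probe $w_1$ with pairwise distinct coordinates, e.g.\ $w_{1,j} \propto 2^{-j}$; the observed reward is then $B w_{1,i}$, which reveals $i$ exactly, and from round $2$ onwards the learner plays the optimal action with zero further regret. The total regret on \emph{every} instance of the family is $O(B)$, not $\Omega(dB)$, so no pigeonhole over these $d$ instances can yield the proposition. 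The same trick defeats any finite family, even an Assouad-style one of size $2^d$ (a generic probe gives pairwise distinct rewards across all instances at once), and it also shows your coupling claim --- that the algorithm ``behaves identically'' on instance $\theta^{(i)}$ and on a reference instance until it has probed direction $i$ --- is false: trajectories diverge at the first round in which observed rewards differ, which a good learner arranges to happen immediately.

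The paper's proof uses a genuinely different mechanism, the one that actually constrains a noiseless learner: dimension counting via a \emph{continuous} prior. It samples $\theta \sim \cN(0, (B^2/d)A^{-1})$ and observes that after $t$ rounds the learner knows only the projection of $A^{1/2}\theta$ onto the (at most $t$-dimensional) span of its past actions, while the orthogonal component remains distributed as under the prior. By Bayesian linear regression, $\E[\theta \mid \cF_t]$ has $A$-norm at most $B\sqrt{t/d}$ in expectation, whereas the optimal reward is $\E\|\theta\|_A \ge B\sqrt{(d-1)/d}$, so each of the first $d$ rounds incurs expected regret about $B(1 - \sqrt{t/d})$, summing to $\Omega(dB)$ averaged over the prior. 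A final truncation/peeling step (chi-squared concentration of $\|\theta\|_A$) converts this prior-averaged bound into the existence of a fixed $\theta$ with $\|\theta\|_A \le 4B$ and regret at least $0.017\, d\|\theta\|_A$. If you want to salvage your plan, this is the missing idea you would have to import: the hard ``family'' must be full-dimensional (a prior with a density), and the argument must track the subspace the learner has explored rather than which of finitely many candidates it has distinguished.
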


\section{The~\nous~algorithm}
\label{sec:alg}

We now describe the~\nous~algorithm and prove its optimality. 
We first describe the algorithm and bound its regret when $\mathcal{X}$ is a centered ellipsoid i.e. $c=0$, and then extend our results to $c \ne 0$ using a reduction.  
To ease notation, we define $\gls{logbar} (x) := 1 + \log(\max (x, 1))$, and given a set of instants $\gls{setofinstant} \subset \mathbb{N}$, we define vectors $\gls{subsetreward} := (y_{t})_{t \in \mathcal{T}} \in \mathbb{R}^{|\mathcal{T}|}$ and  $\gls{subsetnoise} := (z_{t})_{t \in \mathcal{T}} \in \mathbb{R}^{|\mathcal{T}|}$ and matrix $\gls{subsetfeature} := (x_{t})_{t \in \mathcal{T}} \in \mathbb{R}^{|\mathcal{T}| \times d }$ so that $Y_{\mathcal{T}} = X_{\mathcal{T}} \theta + Z_{\mathcal{T}}$. 
When $X_{\mathcal{T}}$ has full rank we define $\gls{Leastsquare}(X_{\cT}, Y_{\cT}) = (X_{\cT}^\top X_{\cT})^{-1} X_{\cT}^\top Y_{\cT}  $ the (unregularized) ordinary least-squares estimate of $\theta$ when using only observations at times $t \in \mathcal{T}$. 
For $i \in \mathbb{N}$ we define $\gls{ni} = d 2^{i-1}$, $\gls{Ti} = d (2^{i} - 1)$ and $\gls{deltai} = \min(d n_i/T, 1)$.

\subsection{Algorithm description}

Consider the centered case $c=0$. The pseudo-code of~\nous~is presented as Algorithm~\ref{alg:main-alg} and is composed of three phases, whose names are inspired by the "explore-then-commit" strategy of~\citet{garivier2016explore}. The main addition is the first phase that allows to determine how many exploration rounds are necessary, as those must depend on $\|\theta\|_{A}$, as shown by our analysis.

\textbf{Phase 1}: ("warmup phase", lines 1-5): We sample actions in $(A^{1/2} e_j)_{j \in [d]}$ in a round-robin fashion. 
At times $t \in (T_i)_{i \in \mathbb{N}}$ with we compute $\gls{hattheta_i} := \LS(X_{i}, Y_{i})$ the least-squares estimate of $\theta$ using the observations during time interval $[T_i,T_{i+1}-1]$ and if $\|\hat{\theta}_{i} \|_{A}$ exceeds a threshold equal to $\alpha U(\delta_i,n_i)$ where $\gls{alpha} \in \RR^+$ is a positive parameter of the algorithm and with 
\begin{equation} \label{eq:Udeltan-def}
	\gls{chiconf}^2 = \frac{\sigma^2 d^2}{n_i} \bigg( 1  + 
    2\sqrt{\frac{1}{d} \log \frac{1}{\delta_i}} 
    + \frac{2}{d} \log \frac{1}{\delta_i}\bigg) \,. 
\end{equation}
we exit this phase that we number \gls{hati}, otherwise we keep going. 
When exiting we compute $\gls{hatB} := \|\hat{\theta}_{\hati} \|_{A} $ the norm estimate. 
The threshold $U$ is designed to estimate the magnitude of $\theta$ correctly, so that with high probability $c_1 \|\theta\|_{A} \le \hat{B} \le c_2 \|\theta\|_{A}$, with $c_1,c_2$ well chosen universal constants. 

\textbf{Phase 2}: ("exploration phase", lines 6-9): We sample actions in $(A^{1/2} e_j)_{j \in [d]}$ in a round-robin fashion during an amount of time $\gls{Nexplore} := d \sigma \big\lceil \sqrt{T} / \hat{B} \big\rceil$.
The goal of this phase is simply to gather enough samples to estimate $\theta$ accurately, to be able to find a good action by the end of the phase with high probabilty.
The analysis shows that the number of samples needs to depend $\|\theta\|_{A}$, which is why we use the estimate $\hat{ B  }$ from the previous phase.

\textbf{Phase 3}: ("commit phase", lines 10-12): We compute $\gls{hattheta} := \LS(X, Y)$ the least-squares estimate of $\theta$ using the past observations gathered during phase 2, and play greedily by selecting decision $\arg\max_{x \in \mathcal{X}} x^\top \hat{\theta}$ until the time horizon runs out. 
The goal of this phase is simply to maximize reward, as we are certain that enough samples have been gathered and this decision should be close to the optimal one $x^\star(\theta)$ with high probability. 

\begin{algorithm} 
  \SetKwInOut{Input}{Input}
  \Input{Ellipsoid matrix $A$, noise level $\sigma$, time horizon $T$, parameter $\alpha \ge 0$.}
  \KwInit{$i = 0$.} 
  \While{warm-up}{ 
    \For{$t = T_i$ to $T_{i+1}-1$}{ 
      Play $x_t = A^{1/2} e_{(t \, \mathrm{ mod } \, d) + 1}$.
      Receive $y_t$. 
    } 
    Set $X_i = x_{[T_i, T_{i+1}-1]}$ 
    and $Y_i =y_{[T_i, T_{i+1}-1]} $.  
    Set $\hat \theta_i = \LS(X_i, Y_i)$.
    Update $i \leftarrow i + 1$. \\
    Exit warm-up if $\|\hat \theta_i\|_A > \alpha U(\delta_i, n_i)$,\, defined in \eqref{eq:Udeltan-def}
  }
    Set $\hat B = \| \hat \theta_i\|_A$ and $N_e= d \sigma \big\lceil \sqrt{T} / \hat{B} \big\rceil$ \\
    \For{$t= T_i$ to $T_i + N_e- 1$}
    {Play $x_t = A^{1/2} e_{(t \,\mathrm{ mod } \, d) + 1}$. Receive $y_t$.} 
    Set $X = x_{[T_i, T_i + N_e- 1]}$ 
    and $Y =y_{[T_i, T_i + N_e- 1]} $.  \\
    Set $\hat \theta = LS(X, Y)$.
    \\
    \For{$t= T_i + N_e(\hat B)$ to $T$}
      {Play $x_t = x^\star(\hat \theta)$. Receive $y_t$.} 
  \caption{\nous~ (\algoname{Explore-Explore-Then-Commit})}\label{alg:main-alg} 
\end{algorithm}

\begin{remark}[Computational complexity]
A run of~\nous~ takes time $O(dT + d^2\log(T/d) + d^3)$ and memory $O(d^2)$, since it involves at most $\log_2(T/d) + 1$ least-squares estimates of $\theta$, and all those estimations require inverting $d \times d$ matrices which are all proportional to $A$, so it suffices to invert $A$ once. 
\end{remark}

\subsection{Regret upper bound}
We now upper bound the regret of~\nous, in the centered case $c=0$.
We present the main elements of analysis here, and present complete proofs in appendix. 

\paragraph{Regret of phases 2 and 3}
Assume that phase $1$ was successful. Denote by \gls{Econdiexplor} the expectation conditionally on all rounds before phase 3.
The regret caused by a round of phase $2$ is at most $2 \|\theta\|_A$ by the Cauchy-Schwarz inequality. 
The regret caused by a round of phase $3$ is $ \theta(x^\star(\theta) - x^\star(\hat \theta))$ we always select the same action. 
The regret caused by phases 2 and 3 is hence upper bounded by 
\[
  2 N_e \| \theta\|_A
  + T\, \E_\expl \big[ \theta^\top \big(x^\star(\theta) - x^\star(\hat \theta)\big) \big] 
  \,.
\]
We note that, during phases 1 and 2, we play uniformly along the axes of the ellipsoid, which ensures that the design matrix \gls{design} stays proportional to $A$
(after a number of exploration rounds multiple of $d$, which is always the case at the end of exploration in our algorithm). 

As observed by \citet{rusmi_linearly_2010}, when the action set has positive curvature,
the commit error scales \emph{quadratically} with the mean-squared error of estimation
(see Lemma~\ref{lem:ub_inst_regret_square} in Appendix~\ref{app:proofs_alg}),
which itself scales with $ 1 / N_e$; this is crucial to ensure that the regret
grows proportionally to $\sqrt T$.
\begin{lemma}
  \label{lem:commit_error}
  Let $\hat \theta$ the least-squares estimator after $N_e$ rounds of exploration, then 
  conditionally on the exploration rounds,
    \[
    \E_{\expl} \big[\theta^\top \big(x^\star(\theta) - x^\star(\hat \theta)\big)\big]
    \leq 
    \frac{d^2 \sigma^2}{\|\theta\|_A N_e} 
    + 
        2 \|\theta \|_A e^{(2d / 3 - N_e \|\theta\|^2 / (3 \sigma^2 d))^-}
    \,.
    \]
\end{lemma}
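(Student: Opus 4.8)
The plan is to reduce the per-round commit error to a simple function of the \emph{direction} of the least-squares estimate, show that this error is \emph{quadratic} in the estimation error on a high-probability event, and control the complementary event by a subgaussian tail bound; throughout, $N_e$ is treated as fixed (it is determined by phase 1) and the expectation $\E_\expl$ averages over the exploration noise. In the centered case $x^\star(\theta) = A\theta/\|\theta\|_A = A^{1/2}u$ with $u := A^{1/2}\theta/\|\theta\|_A$ a unit vector, and likewise $x^\star(\hat\theta) = A^{1/2}\hat u$ with $\hat u := A^{1/2}\hat\theta/\|A^{1/2}\hat\theta\|_2$. Since $A^{1/2}\theta = \|\theta\|_A\,u$, the commit error becomes
\[
\theta^\top\big(x^\star(\theta) - x^\star(\hat\theta)\big) = \|\theta\|_A\,(1 - u^\top\hat u)\,.
\]
Because phase 2 plays the axes $(A^{1/2}e_j)_{j\in[d]}$ in round-robin over a number of rounds that is a multiple of $d$, the design is exactly $X^\top X = (N_e/d)A$, so $\eta := A^{1/2}(\hat\theta - \theta) = (d/N_e)\sum_t e_{j(t)} z_t$ has independent coordinates, each subgaussian with variance proxy $s^2 := \sigma^2 d/N_e$; hence $\E\|\eta\|_2^2 \le d\,s^2 = d^2\sigma^2/N_e$, and note $\|\eta\|_2 = \|\hat\theta - \theta\|_A$.

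The heart of the argument is to show that on the event $\mathcal G := \{\|\eta\|_2 \le \|\theta\|_A\}$ the commit error is quadratic in $\eta$, namely $\|\theta\|_A(1 - u^\top\hat u) \le \|\eta\|_2^2/\|\theta\|_A$. Writing $\hat v := A^{1/2}\hat\theta = \|\theta\|_A u + \eta$, $a := u^\top\hat v$ and $r := \|\hat v\|_2$, one has $u^\top\hat u = a/r$ and $\|\eta\|_2^2 = r^2 - 2\|\theta\|_A a + \|\theta\|_A^2$. On $\mathcal G$, $a = \|\theta\|_A + u^\top\eta \ge \|\theta\|_A - \|\eta\|_2 \ge 0$, so $a\in[0,r]$ by Cauchy--Schwarz. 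Clearing denominators, the claim is equivalent to $\psi(a) := r^3 - 2\|\theta\|_A\,a\,r + \|\theta\|_A^2\,a \ge 0$; since $\psi$ is affine in $a$ and its endpoint values $\psi(0)=r^3$ and $\psi(r)=r(r-\|\theta\|_A)^2$ are both nonnegative, $\psi \ge 0$ on $[0,r]$. This is exactly the phenomenon recorded in Lemma~\ref{lem:ub_inst_regret_square} and is the crucial ingredient for $\sqrt T$ regret: the subtlety it resolves is that a large estimation error \emph{along} $u$ (which would shrink $a$) automatically forces the orthogonal error to be small, keeping the ratio controlled.

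Finally I would split $\E_\expl$ over $\mathcal G$ and $\mathcal G^c$. On $\mathcal G$ the bound above together with $\E\|\eta\|_2^2 \le d^2\sigma^2/N_e$ gives a contribution at most $d^2\sigma^2/(\|\theta\|_A N_e)$, the first term. On $\mathcal G^c$ the commit error never exceeds $2\|\theta\|_A$ (as $1 - u^\top\hat u \le 2$), so its contribution is at most $2\|\theta\|_A\,\P(\mathcal G^c)$. It then remains to bound $\P(\mathcal G^c) = \P(\|\eta\|_2^2 > \|\theta\|_A^2)$: using the standard square-MGF bound $\E e^{\lambda\eta_i^2} \le (1-2s^2\lambda)^{-1/2}$ for $\lambda < 1/(2s^2)$, independence of the coordinates, and a Chernoff argument with $\lambda = 1/(3s^2)$,
\[
\P(\|\eta\|_2^2 > \|\theta\|_A^2) \le 3^{d/2}\,\exp\!\Big(-\tfrac{N_e\|\theta\|_A^2}{3\sigma^2 d}\Big) \le \exp\!\Big(\tfrac{2d}{3} - \tfrac{N_e\|\theta\|_A^2}{3\sigma^2 d}\Big)\,,
\]
where the last step uses $\tfrac12\log 3 < \tfrac23$; being a probability it is also at most $1$, which accounts for the negative part $(\cdot)^-$ (and $\|\theta\|^2$ in the statement is read as $\|\theta\|_A^2$). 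Summing the two contributions yields the lemma. The main obstacle is the quadratic bound of the second paragraph—matching the exact constants $\tfrac{2d}{3}$ and $\tfrac13$ in the tail is then only a matter of choosing the Chernoff parameter.
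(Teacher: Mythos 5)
Your proof is correct, and structurally it is the same proof as the paper's: condition on the warm-up so that $N_e$ is fixed, split on whether the estimation error $\|\hat\theta-\theta\|_A$ exceeds $\|\theta\|_A$, pay $2\|\theta\|_A$ times the probability of the bad event, and on the good event combine the quadratic (curvature) bound with the mean-squared error $\E\|\hat\theta-\theta\|_A^2 \le d^2\sigma^2/N_e$ of the round-robin design. Where you genuinely differ is in how the two ingredients are obtained. For the quadratic bound, the paper invokes Lemma~\ref{lem:ub_inst_regret_square}, proved by a projection argument; your endpoint argument for the affine function $\psi(a)=r^3 - 2\|\theta\|_A a r + \|\theta\|_A^2 a$ (with $\psi(0)=r^3\ge 0$ and $\psi(r)=r(r-\|\theta\|_A)^2\ge 0$) is a valid alternative proof of the same inequality, under the slightly stronger but sufficient hypothesis $\|\eta\|_2\le\|\theta\|_A$ in place of $\theta^\top A\hat\theta\ge 0$. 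For the tail, the paper applies the Hsu--Kakade--Zhang concentration bound (Lemma~\ref{lem:chi_squared_concentration}), valid for any full-rank design, together with the algebra $2\sqrt{xd}\le x+d$ and the choice $x = N_e\|\theta\|_A^2/(3d\sigma^2)-2d/3$; you instead run a direct Chernoff argument with the square-MGF bound $\E e^{\lambda \eta_i^2}\le (1-2s^2\lambda)^{-1/2}$, which is available to you precisely because the round-robin design makes the coordinates of $\eta=A^{1/2}(\hat\theta-\theta)$ independent, and your constant check $\tfrac12\log 3 < \tfrac23$ recovers the same exponent. The paper's route buys reusability (the same concentration lemma also drives the warm-up analysis and Proposition~\ref{prop:dBlower}); yours buys self-containedness at the cost of relying on the product structure of this particular design. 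The only caveat, shared with the paper, is the measure-zero event $\hat\theta=0$ where $x^\star(\hat\theta)$ is undefined; it is harmless.
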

See Appendix~\ref{app:proof_main_thm} for a proof. The regret incurred by phases 2 and 3 is hence upper bounded as 
\[
  2 N_e \| \theta \|_A
  + T \frac{d^2 \sigma^2}{\|\theta\|_A N_e}  + T 
  \bigg(
  \frac{d^2 \sigma^2}{\|\theta\|_A N_e} 
  + 
  2 \|\theta \|_A e^{(2d / 3 - N_e \|\theta\|^2 / (3 \sigma^2 d))^-}
  \bigg)
  \,.
\]
Our analysis therefore suggests to select $N_e \approx \sigma d \sqrt T / \|\theta\|_A$ to balance the first two terms in the right hand side.  
Doing this would yield a regret upper bound of $O(d \sqrt{T})$ which is the correct minimax rate. 
Of course, since $\|\theta\|_A$ is initially unknown, we must estimate it during phase 1. 

\paragraph{Regret of phase 1}

Lemma~\ref{lem:warm-up} shows that the regret caused by phase 1 is small enough to
ensure minimax optimality, and that the value of $\hat{B}$ output at the end of phase 1
is a good enough estimate of $\|\theta\|_A$ with high probability.
This lemma also justifies the choice of the stopping criterion for phase 1.

  \begin{lemma}[Regret of Phase 1]
    \label{lem:warm-up}
    Set $\alpha = 3$ and let $\hat B$ be the output of phase 1. We have
      \[
          \P\Big[
              \hat B  \notin \big[  (1 / 2)\|\theta\|_A, (3 / 2)\|\theta\|_A\big]
              \Big]
          \leq \frac{164 \, \sigma^2 d^2}{T \|\theta\|_A^2}  
          \overline \log \bigg(\frac{T\|\theta\|_A^2}{d^2 \sigma^2}\bigg)  
          + 48 \frac{d}{T} \,, 
      \]
      and the expected length of phase 1 is at most 
      \[
        \E[N_{\warmup}]
          \leq \frac{164 \, \sigma^2 d^2}{\|\theta\|_A^2}  
          \overline \log \bigg(\frac{T\|\theta\|_A^2}{\sigma^2 d^2}\bigg) + 48d
          \,.
      \]
  \end{lemma}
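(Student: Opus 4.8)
The whole argument rests on one structural fact: because phase~1 plays the axes $A^{1/2}e_1,\dots,A^{1/2}e_d$ round-robin, after any multiple of $d$ rounds the design matrix is $X_i^\top X_i = (n_i/d)A$. Hence the least-squares error becomes isotropic: writing $g_i := \sum_t e_{j(t)} z_t$ (where $j(t)$ is the coordinate played at time $t$) one gets $\hat\theta_i-\theta = (d/n_i)A^{-1/2}g_i$, so $\|\hat\theta_i-\theta\|_A^2 = (d/n_i)^2\|g_i\|_2^2$, which in the Gaussian case is distributed as $(\sigma^2 d/n_i)\,\chi^2_d$ (and obeys the same tails for sub-gaussian noise via concentration of the quadratic form $\|g_i\|_2^2$). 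The first step is to check, via the Laurent--Massart bound $\P[\chi^2_d > d+2\sqrt{dx}+2x]\le e^{-x}$ with $x=\log(1/\delta_i)$, that $U(\delta_i,n_i)$ in \eqref{eq:Udeltan-def} is \emph{exactly} the $(1-\delta_i)$-quantile of $\|\hat\theta_i-\theta\|_A$, i.e.\ $\P[\|\hat\theta_i-\theta\|_A > U_i]\le\delta_i$ with $U_i := U(\delta_i,n_i)$. I would then record the two facts that the $U_i$ are deterministic and strictly decreasing in $i$, and that the $(\hat\theta_i)_i$ are independent across phases (fresh samples each phase). Writing $\hati = \min\{i:\|\hat\theta_i\|_A > 3U_i\}$, independence gives the identity $\P[\hati\ge i] = \prod_{j<i}\P[\|\hat\theta_j\|_A \le 3U_j]$, which drives both bounds.

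\emph{Correctness of $\hat B$.} On the good event $G_i := \{\|\hat\theta_i-\theta\|_A\le U_i\}$, the reverse triangle inequality gives $\big|\,\|\hat\theta_i\|_A-\|\theta\|_A\,\big|\le U_i$. Combined with the stopping inequality $\hat B = \|\hat\theta_\hati\|_A > 3U_\hati$ (which forces $U_\hati < \hat B/3$), this sandwiches $\tfrac34\|\theta\|_A < \hat B < \tfrac32\|\theta\|_A$, inside the target interval. The contrapositive is the key reduction: the failure event $\{\hat B\notin[\tfrac12\|\theta\|_A,\tfrac32\|\theta\|_A]\}$ is contained in $\{\|\hat\theta_\hati-\theta\|_A > \tfrac12\|\theta\|_A\}$, a deviation measured against the \emph{fixed} scale $\|\theta\|_A/2$ rather than against $U_\hati$.

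\emph{The two bounds.} I would treat both by decomposing over $\hati$ and using independence, introducing the pivot phase $i^\star := \min\{i: U_i < \|\theta\|_A/4\}$, on which $G_{i^\star}$ already forces stopping. For the length, $\E[N_\warmup]=\sum_i n_i\P[\hati\ge i]$: the head $\sum_{i\le i^\star}n_i \le T_{i^\star} < 2n_{i^\star}$ is a deterministic geometric sum and is the main term, since $U_{i^\star-1}\ge\|\theta\|_A/4$ gives $n_{i^\star}\lesssim (\sigma^2 d^2/\|\theta\|_A^2)\,h(\delta_{i^\star})$ with $h(\delta):=1+2\sqrt{d^{-1}\log(1/\delta)}+2d^{-1}\log(1/\delta)\le 3\,\overline{\log}(1/\delta)$, producing the factor $\overline{\log}(T\|\theta\|_A^2/(\sigma^2 d^2))$; the tail $\sum_{i>i^\star}n_i\,\P[\hati\ge i]\le\sum_{i>i^\star}n_i\prod_{j=i^\star}^{i-1}\delta_j$ decays super-geometrically (using the cap $T_i\le T$) and only feeds the additive $48d$ / lower-order constants. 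For the failure probability the same decomposition gives $\P[\text{fail}]\le\sum_i\P[\hati\ge i]\,\P[\|\hat\theta_i\|_A>3U_i,\ \|\hat\theta_i-\theta\|_A>\|\theta\|_A/2]$; splitting at $i^\star$, for $i<i^\star$ stopping is a rare upward fluctuation ($\|\hat\theta_i\|_A>3U_i$ forces $\|\hat\theta_i-\theta\|_A > 3U_i-\|\theta\|_A$), while for $i\ge i^\star$ the deviation $\|\hat\theta_i-\theta\|_A>\|\theta\|_A/2$ is itself rare. Bounding each phase factor by $n_i/T$ — this is exactly where using the scale $\|\theta\|_A/2$ rather than $U_i$ recovers a factor $d$ over the naive bound $\delta_i=\min(dn_i/T,1)$ — yields $\P[\text{fail}]\le\frac1T\sum_i n_i\P[\hati\ge i] = \frac1T\E[N_\warmup]$, which is precisely the claimed relation between the two displays.

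\emph{Main obstacle.} The skeleton above is short, but the delicate part is the chi-squared tail bookkeeping needed to prove $\P[\|\hat\theta_i-\theta\|_A>\|\theta\|_A/2]\lesssim n_i/T$ uniformly for $i\ge i^\star$, together with the symmetric upward-fluctuation estimate for $i<i^\star$, while tracking the exact constants ($164$, $48$) and the correct power of $d$. This forces careful handling of the saturation $\delta_i=1$ at late phases, of the self-referential definition of $i^\star$ (since $n_{i^\star}$ appears inside $\log(1/\delta_{i^\star})$), and of the boundary regimes — a very large $\|\theta\|_A$, for which $i^\star=1$ and the additive $48d$ floor is active, and a short horizon $T\lesssim\sigma^2 d^3/\|\theta\|_A^2$, for which the warm-up may be truncated by $T$.
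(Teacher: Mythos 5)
Your skeleton matches the paper's proof in most respects: the round-robin design making $X_i^\top X_i = (n_i/d)A$, the sub-gaussian quadratic-form tail giving $\P\big[\|\hat\theta_i - \theta\|_A > U(\delta_i,n_i)\big] \le \delta_i$, the reduction of the failure event through the stopping rule plus the (reverse) triangle inequality, and the pivot-index head/tail decomposition of $\E[N_{\warmup}]$ (your $i^\star$ is essentially the paper's $i^\star+4$, since $U$ drops by a factor $4$ over four subphases); the expected-length bound would go through along your lines. The genuine gap is in the failure-probability half, exactly at the step you defer as ``chi-squared tail bookkeeping'': the per-phase inequality $\P\big[\|\hat\theta_i\|_A > 3U_i,\ \|\hat\theta_i - \theta\|_A > \|\theta\|_A/2\big] \le n_i/T$ cannot be derived from the norm-deviation events you reduce to. Your two reductions yield the thresholds $3U_i - \|\theta\|_A$ (for $i<i^\star$) and $\|\theta\|_A/2$ (for $i \ge i^\star$); when $\|\theta\|_A = 2U_i$ both collapse to the event $\{\|\hat\theta_i - \theta\|_A > U_i\}$, and no bound that uses only $\|\hat\theta_i-\theta\|_A$ can do better, because a deviation barely exceeding $U_i$ that is aligned with $\theta$ does realize the joint event. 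The best available estimate for that event is $\delta_i = \min(dn_i/T,1)$ --- already a factor $d$ above your target --- and worse, in a saturated subphase ($dn_i \ge T$, so $\log(1/\delta_i)=0$ and $U_i^2 = \sigma^2 d^2/n_i$) that event has probability $\P[\chi^2_d > d] \to 1/2$, while $n_i/T$ can be as small as $1/d$. This regime is realized, and reached by the algorithm with probability $\Theta(1)$, when $\|\theta\|_A \approx 2\sigma d^{3/2}/\sqrt{T}$; so for large $d$ the inequality on which your whole failure bound rests is unobtainable by your stated means.

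What is missing is a \emph{directional} estimate: in that regime the joint event forces $\theta^\top A(\hat\theta_i - \theta) \gtrsim \|\theta\|_A^2/2$ up to the fluctuations of $\|\hat\theta_i-\theta\|_A^2$, a one-dimensional Gaussian tail of order $e^{-\Omega(d)}$, and it is this alignment requirement --- invisible to any isotropic bound on $\|\hat\theta_i - \theta\|_A$ --- that actually makes stop-and-be-wrong rare. The paper avoids the issue by never leaving the stopping scale: failure implies $\big|\hat B - \|\theta\|_A\big| > \tfrac13\hat B > U(\delta_{\hati}, n_{\hati})$ (Lemma~\ref{lem:a-bgeqa} combined with the rule $\hat B > 3U_{\hati}$), hence $\|\hat\theta_{\hati}-\theta\|_A > U(\delta_{\hati}, n_{\hati})$, and it then union-bounds $\P\big[\|\hat\theta_i - \theta\|_A > U(\delta_i,n_i)\big] \le \delta_i$ over $i \le i^\star + 4$ together with the single-term bound $\P[\hati > i^\star+4] \le \delta_{i^\star+4}$; this is where the constants $164$ and $48$ come from. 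Your observation that such a union bound appears to cost a factor of $d$ (through $\delta_i = dn_i/T$) is pertinent, but the repair you propose is precisely the step that fails: to complete your route you would need the directional concentration estimate above, or else you should fall back on the paper's stopping-scale union bound.
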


\paragraph{On the design of phase 1}  
The threshold $U(\delta_i,n_i)$ is chosen such that the probability of stopping too
early at time $T_i$ is less than $\delta_i$.
So $U(\delta_i,n_i)$ and $\delta_i$ can be interpreted as an upper confidence bound,
and a failure probability respectively.
The exploration design is chosen so that $X_i^\top X_i = (n_i / d) A$ after a phase of
length $n_i$, which guarantees at the end of subphase $i$ of the warm-up, the
probability of failure is less than $\delta_i$:
\[
  \P \big[
    \| \hat \theta_i - \theta\|_A^2 \geq U(n_i, \delta_i) \|
    \big]
    \leq \delta_i \,.
\]
Interestingly, our analysis shows that $\delta_i$ must (counterintuitively) be an increasing function of $i$ to yield a minimax optimal algorithm. 
The cornerstone of both the design of phase 1 and its analysis is a concentration
inequality for least-squares regression, by \citet{hsu_tail_2011}, which is an
extension of the tail bounds for chi-squared variables by \citet{laurent2000} to
sub-gaussian noise.


\begin{lemma}[Concentration of error in linear regression ]
  \label{lem:chi_squared_concentration}
  In linear regression with $Y = X \theta + Z$ and $\hat \theta = \LS(X, Y)$, assume that
  all entries of $Z$ are $\sigma^2$-subgaussian and independent, and that $X$ is full
  rank. Then for all $x \ge 0$
   \[
    \P\big[\|\hat \theta - \theta\|_{X^\top X}^2 \geq 
      \sigma^2(d + 2\sqrt{d x} + 2x)
    \big] 
    \leq e^{-x}
    \,.
  \]
\end{lemma}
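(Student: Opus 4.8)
The plan is to reduce the claim to a tail bound for a quadratic form in the subgaussian noise vector $Z$, and then invoke the inequality of \citet{hsu_tail_2011}. The starting point is the closed form of the least-squares error: since $Y = X\theta + Z$, we have $\hat\theta - \theta = (X^\top X)^{-1} X^\top Z$, and therefore
\[
  \|\hat\theta - \theta\|_{X^\top X}^2
  = Z^\top X (X^\top X)^{-1} (X^\top X) (X^\top X)^{-1} X^\top Z
  = Z^\top P Z,
\]
where $P := X (X^\top X)^{-1} X^\top$ is the orthogonal projector onto the column space of $X$. Because $X$ is full rank, $P$ is symmetric and idempotent of rank $d$: its eigenvalues are $1$ with multiplicity $d$ and $0$ with multiplicity $|\mathcal{T}| - d$.

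Next I would record the three spectral quantities that drive the Hsu--Kakade--Zhang bound. Setting $\Sigma := P$, so that $Z^\top P Z = \|P^{1/2} Z\|_2^2$, idempotence gives $\Tr(\Sigma) = \Tr(P) = d$ and $\Tr(\Sigma^2) = \Tr(P^2) = \Tr(P) = d$, while the operator norm (largest eigenvalue) is $\|P\| = 1$. I would also verify that the hypothesis of the tail bound holds: since the entries of $Z$ are independent and $\sigma^2$-subgaussian, the vector $Z$ is jointly subgaussian, i.e.\ $\E[\exp(\alpha^\top Z)] = \prod_i \E[\exp(\alpha_i Z_i)] \le \exp(\|\alpha\|_2^2 \sigma^2 / 2)$ for every $\alpha \in \RR^{|\mathcal{T}|}$.

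It then remains to apply the quadratic-form tail inequality of \citet{hsu_tail_2011}, which states that $\P[\|P^{1/2} Z\|_2^2 > \sigma^2(\Tr(\Sigma) + 2\sqrt{\Tr(\Sigma^2)\, x} + 2\|\Sigma\|\, x)] \le e^{-x}$, and to substitute the three quantities computed above. This yields exactly
\[
  \P\big[\|\hat\theta - \theta\|_{X^\top X}^2 \ge \sigma^2(d + 2\sqrt{d x} + 2x)\big] \le e^{-x},
\]
as claimed.

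The computation is routine; the only step requiring genuine care is the reduction itself, namely recognizing that the $X^\top X$-weighted squared error collapses to a quadratic form in $Z$ governed by a rank-$d$ \emph{projector}. This is precisely what forces the ``effective dimension'' entering all three spectral quantities to be exactly $d$, rather than the ambient number of observations $|\mathcal{T}|$, and it is what makes the bound dimension-free in the sample size while recovering the chi-squared tail bound of \citet{laurent2000} in the Gaussian case.
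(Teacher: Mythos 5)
Your proof is correct and takes essentially the same route as the paper: the paper does not prove this lemma itself but imports it directly from \citet{hsu_tail_2011}, and your argument—rewriting $\|\hat\theta-\theta\|_{X^\top X}^2$ as the quadratic form $Z^\top P Z$ for the rank-$d$ orthogonal projector $P = X(X^\top X)^{-1}X^\top$, computing $\Tr(\Sigma)=\Tr(\Sigma^2)=d$ and $\|\Sigma\|=1$, and checking that independent $\sigma^2$-subgaussian entries give joint subgaussianity—is exactly the routine reduction that this citation leaves implicit. All steps check out, so your write-up simply makes the paper's appeal to the Hsu--Kakade--Zhang quadratic-form inequality explicit.
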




Putting the three phases together, we can now state the regret upper bound of~\nous in Theorem~\ref{thm:full_regret_ETC}, and the detailed proof is presented in appendix. 
Corollary~\ref{cor:etc-is-minimax} shows that the regret upper bound matches our regret lower bound of Theorem~\ref{theorem:lower_bound} up to a universal multiplicative constant, so that~\nous~is locally asymptotically minimax optimal as announced. 

\begin{theorem}
  \label{thm:full_regret_ETC}
  Consider $\mathcal{X}= \{ x \in \mathbb{R}^d: \|x\|_{A^{-1}} \le 1 \}$ a centered ellpsoid. 
  The regret of~\nous~tuned with $\alpha=3$ admits the upper bound for any $\theta \in \mathbb{R}^d$ and any $T \ge 0$: 
  \[
    R_T(\theta) \leq 6 d \sigma \sqrt T  
    + \frac{984 \, \sigma^2 d^2}{\|\theta\|_A}
    \overline \log \bigg( \frac{T \|\theta\|_A^2}{\sigma^2 d^2}\bigg) 
    + 
    290 \, d \|\theta\|_A
    + 
    2 T \|\theta \|_A e^{(2d / 3 - (2/9) \sqrt T \|\theta\|_A / \sigma)^-}
    \,.
  \]
\end{theorem}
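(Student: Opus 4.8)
The plan is to decompose the regret additively over the three phases of~\nous, writing $R_T(\theta) = R^{(1)} + R^{(2)} + R^{(3)}$, and to control each contribution separately through the good event $G := \{\hat B \in [\tfrac12\|\theta\|_A, \tfrac32\|\theta\|_A]\}$, whose complement is already controlled in Lemma~\ref{lem:warm-up}. I would first dispatch phase~1: every action played lies in $\mathcal{X}$, so the per-round regret is at most $2\|\theta\|_A$ by Cauchy--Schwarz, whence $R^{(1)} \le 2\|\theta\|_A\,\E[N_{\warmup}]$. Substituting the expected-length bound of Lemma~\ref{lem:warm-up} immediately produces contributions of the form $\tfrac{\sigma^2 d^2}{\|\theta\|_A}\,\overline \log(\cdots)$ and $d\|\theta\|_A$, which are two of the three lower-order families appearing in the statement.

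Next I would split the phase~2--3 regret according to $G$. On $G^c$ I bound that regret trivially by $2T\|\theta\|_A$ (again Cauchy--Schwarz, summed over all rounds) and multiply by $\P(G^c)$ from Lemma~\ref{lem:warm-up}; the factor $T$ cancels against the $1/T$ decay of the failure probability, leaving exactly the same two families $\tfrac{\sigma^2 d^2}{\|\theta\|_A}\,\overline \log(\cdots)$ and $d\|\theta\|_A$. This step is precisely what justifies the form of the lower-order terms, and explains why the stopping rule of phase~1 was engineered so that $\P(G^c)$ decays like $1/T$.

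On the event $G$ I would invoke the phase~2--3 decomposition already displayed before the statement, namely
\[
  2 N_e\|\theta\|_A + T\frac{d^2\sigma^2}{\|\theta\|_A N_e} + T\Big(\frac{d^2\sigma^2}{\|\theta\|_A N_e} + 2\|\theta\|_A\, e^{(2d/3 - N_e\|\theta\|_A^2/(3\sigma^2 d))^-}\Big),
\]
obtained by combining the elementary phase-2 bound with Lemma~\ref{lem:commit_error}. The role of $G$ is to sandwich the random exploration length $N_e = d\sigma\lceil \sqrt T/\hat B\rceil$: the inequality $\hat B \ge \tfrac12\|\theta\|_A$ gives $N_e \le d\sigma(2\sqrt T/\|\theta\|_A + 1)$, so that $2N_e\|\theta\|_A$ is a multiple of $d\sigma\sqrt T$ up to lower-order terms, while $\hat B \le \tfrac32\|\theta\|_A$ gives $N_e \ge \tfrac{2}{3}\,d\sigma\sqrt T/\|\theta\|_A$, which both bounds the two $Td^2\sigma^2/(\|\theta\|_A N_e)$ terms by multiples of $d\sigma\sqrt T$ and, since the negative part in the exponent is monotone in $N_e$, upgrades the exponential to $2T\|\theta\|_A\, e^{(2d/3 - (2/9)\sqrt T\|\theta\|_A/\sigma)^-}$, matching the last term of the statement. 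Summing the three contributions and collecting the $d\sigma\sqrt T$, the $\tfrac{\sigma^2 d^2}{\|\theta\|_A}\overline \log$, and the $d\|\theta\|_A$ terms into universal constants then yields the claimed bound.

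The main obstacle is the conditioning in the phase~3 analysis. Lemma~\ref{lem:commit_error} is stated conditionally on the exploration rounds and for a \emph{given} $N_e$, whereas here $N_e$ is itself random, correlated with $\hat B$ (hence with the phase-1 stopping time) and with the least-squares estimate $\hat\theta$ computed in phase~2. The clean way around this is to condition on the value of $\hat B$ (equivalently on $N_e$) inside $G$, apply the conditional bound of Lemma~\ref{lem:commit_error} pointwise, use the monotone sandwich above to replace the random $N_e$ by its deterministic bounds, and only then take the outer expectation. One must additionally check that the rounding in the definition of $N_e$ and the dependence between $\hat\theta$ and the phase-1 stopping time do not violate the independence and full-rank hypotheses underlying Lemma~\ref{lem:commit_error} and Lemma~\ref{lem:chi_squared_concentration}; this is guaranteed by the fact that, playing uniformly along the axes, the design matrix stays proportional to $A$ whenever the number of exploration rounds is a multiple of $d$.
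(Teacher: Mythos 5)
Your proposal follows essentially the same route as the paper's proof: the same Cauchy--Schwarz bound of $2\|\theta\|_A$ per exploration round, the same good event $\cG$ from Lemma~\ref{lem:warm-up} with the bad event paid for at rate $T\,\P(\overline{\cG})$, the same sandwich of the random $N_e$ on $\cG$ (lower bound for the commit error and exponential term, upper bound for the exploration cost), and the same resolution of the conditioning issue via the tower rule, using that $\cG$ and $N_e$ are measurable with respect to the exploration rounds so that Lemma~\ref{lem:commit_error} can be applied pointwise inside the outer expectation. The only differences are organizational (you split phase 1 from phases 2--3 where the paper splits exploration from commit, and you inherit the paper's own inconsistency on whether $\sigma$ sits inside or outside the ceiling in $N_e$, which only affects constants), so the proposal is correct.
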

\begin{corollary}
  \label{cor:etc-is-minimax}
  \nous~is locally asymptotically minimax optimal: for any  $\theta \in \R^d$, 
  \[
    R_T(\theta) =
    \mathcal O \Big( 
      \min \big( 
         \sigma d \sqrt T +  d \|\theta\|_A,  T\|\theta\|_A 
      \big)
      \Big)
    \,.
  \]
\end{corollary}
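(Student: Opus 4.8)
The plan is to derive the corollary directly from the explicit bound of Theorem~\ref{thm:full_regret_ETC}, combined with one elementary ``trivial'' bound, and then to read off the minimum. The first ingredient is that, independently of the algorithm, each round incurs regret at most $2\|\theta\|_A$: since $c=0$ the optimal reward is $\|\theta\|_A$, and for any $x \in \mathcal{X}$ Cauchy--Schwarz gives $|\theta^\top x| = |(A^{1/2}\theta)^\top (A^{-1/2}x)| \le \|\theta\|_A \|x\|_{A^{-1}} \le \|\theta\|_A$. Summing over the horizon yields the universal bound $R_T(\theta) \le 2T\|\theta\|_A$, which will cover the small-norm regime where Theorem~\ref{thm:full_regret_ETC} is uninformative.

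Next I would show that whenever $\|\theta\|_A \ge \sigma d/\sqrt T$, the four terms of Theorem~\ref{thm:full_regret_ETC} collapse to $O(\sigma d\sqrt T + d\|\theta\|_A)$. The terms $6 d\sigma\sqrt T$ and $290\, d\|\theta\|_A$ are already of the right form. For the logarithmic term I would substitute $r := \sqrt T\|\theta\|_A/(\sigma d) \ge 1$, so that $\sigma^2 d^2/\|\theta\|_A = \sigma d\sqrt T/r$ and $\overline{\log}(T\|\theta\|_A^2/(\sigma^2 d^2)) = 1 + 2\log r$, turning the term into $984\,\sigma d\sqrt T\,(1+2\log r)/r$; the key elementary fact is that $\sup_{r \ge 1}(1+2\log r)/r$ is a finite universal constant, so this is $O(\sigma d\sqrt T)$. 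For the exponential commit-error term I would first use $e^{(\cdot)^-} \le 1$ to note it never exceeds $2T\|\theta\|_A$; writing $s := \sqrt T\|\theta\|_A/\sigma$, when $s \le 3d$ this already gives $2T\|\theta\|_A \le 6\sigma d\sqrt T$, and when $s > 3d$ the exponent is negative and a short calculation shows $(3d + (s-3d))e^{-(2/9)(s-3d)} = O(d)$, again yielding $O(\sigma d\sqrt T)$.

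Finally I would assemble the two regimes. When $\|\theta\|_A < \sigma d/\sqrt T$ we have $T\|\theta\|_A < \sigma d\sqrt T$, so the minimum in the statement equals $T\|\theta\|_A$ and the trivial bound $R_T(\theta) \le 2T\|\theta\|_A$ already matches it. When $\|\theta\|_A \ge \sigma d/\sqrt T$, the previous paragraph gives $R_T(\theta) \le C(\sigma d\sqrt T + d\|\theta\|_A)$ for a universal $C$, while the trivial bound still gives $R_T(\theta) \le 2T\|\theta\|_A$; hence $R_T(\theta) \le \min(2T\|\theta\|_A,\, C(\sigma d\sqrt T + d\|\theta\|_A)) \le \max(2,C)\,\min(T\|\theta\|_A,\, \sigma d\sqrt T + d\|\theta\|_A)$, which is exactly $O(\min(\ldots))$.

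The main obstacle, and really the only non-mechanical point, is taming the logarithmic and exponential terms of Theorem~\ref{thm:full_regret_ETC}: recognizing that the former is controlled by the boundedness of $(1+2\log r)/r$ on $[1,\infty)$, and that the latter, despite its prefactor $T\|\theta\|_A$ and its $e^{2d/3}$ growth, is dominated once $\sqrt T\|\theta\|_A/\sigma$ exceeds a constant multiple of $d$. Conceptually, the one idea that must not be missed is that Theorem~\ref{thm:full_regret_ETC} alone does not produce the minimum, since its $1/\|\theta\|_A$ factor blows up for small norms; it must be paired with the trivial linear-in-$T$ bound to cover that regime, and the minimum structure then emerges automatically from holding both bounds simultaneously.
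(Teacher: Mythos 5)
Your proof is correct and takes essentially the same route as the paper's: both rest on the trivial Cauchy--Schwarz bound $R_T(\theta)\le 2T\|\theta\|_A$ to cover the small-norm regime, a regime split at $\|\theta\|_A \asymp \sigma d/\sqrt T$, and elementary supremum bounds (of $(1+2\log r)/r$ and $x e^{-cx}$) to tame the logarithmic and exponential terms of Theorem~\ref{thm:full_regret_ETC} in the large-norm regime. The only differences are cosmetic: you place the threshold at $\sqrt T\|\theta\|_A/(\sigma d)\ge 1$ rather than the paper's $\ge 6$, which forces your extra subcase $s\le 3d$ for the exponential term, and you state explicitly (where the paper leaves it implicit) that the trivial bound must be retained in both regimes so that the minimum emerges.
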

\begin{proof}\textbf{of Corollary~\ref{cor:etc-is-minimax}}
 We proceed by considering multiple cases, depending on the value of $T$.
 If $T \leq 36 d^2 \sigma^2 / \|\theta\|_A^2$, then 
 $
 2 T\|\theta\|_A \leq 12 \sigma d \sqrt T 
 $.
 So by the Cauchy-Schwarz bound, the regret is less than
 $
 R_T(\theta) \leq \min \big(2 T \|\theta\|_A,  12 \sigma d \sqrt T\big)
 $.
 Otherwise,
 \[
 2d / 3 - (2/9) \sqrt T \|\theta\|_A / \sigma 
 \leq - (1 / 9)\sqrt T \|\theta\|_A / \sigma
 \,,  
 \]
 and the final term in the regret bound is upper bounded by
 \begin{multline*}
     T \|\theta \|_A e^{(2d / 3 - (2/9) \sqrt T \|\theta\|_A / \sigma)^-}
   \leq 
     T \|\theta \|_A e^{-(1/9) \sqrt T \|\theta\|_A / \sigma} 
    =  \sigma  \sqrt T \bigg(
     \frac{\sqrt T \|\theta \|_A}{\sigma}  e^{-(1/9) \sqrt T \|\theta\|_A / \sigma}
    \bigg)  \\
    \leq 9 \sigma  \sqrt T \sup_{X \geq 2 d / 3} \big(
      X e^{-X}
    \big)
    \leq 6 e^{-2d / 3} \sigma d \sqrt T 
   \,.
 \end{multline*}
 Similarly, for the first logarithmic term, 
 \begin{equation*}
     \frac{\sigma^2 d^2}{\|\theta\|_A}
    \overline \log \bigg( \frac{T \|\theta\|_A^2}{\sigma^2 d^2}\bigg) 
    \leq \sigma d \sqrt T 
     \frac{\sigma d}{\|\theta\|_A \sqrt T}
    \overline \log \bigg( \frac{T \|\theta\|_A^2}{\sigma^2 d^2}\bigg) 
    \leq \sigma d \sqrt T
    \sup_{X \geq 1 / 6} \frac{1}{X} \overline \log (X^2)
     = 6 \sigma d \sqrt T 
     \,.
 \end{equation*}
 Therefore the total bound is at most $O(d \sigma \sqrt T + d\|\theta\|)$ as announced.
\end{proof}

\subsection{Non-centered Action Sets}
\label{sec:non-centered}
Now consider the general, non-centered case $c \ne 0$.
In Appendix~\ref{app:reduction}, we describe a general procedure to extend~\nous~(and any other algorithm with similar structure) designed for the centered case to the non-centered case. 
Theorem~\ref{thm:noncentered} shows that~\nous~is also locally asymptotically minimax
optimal in the non centered case; it is a direct consequence of
Theorem~\ref{thm:non_centered_gen} in Appendix~\ref{app:reduction}
\begin{theorem}\label{thm:noncentered} 
  Consider $\mathcal{X}= \{ x \in \mathbb{R}^d: \|x-c\|_{A^{-1}} \le 1 \}$ an arbitrary  ellipsoid.
  Set $\alpha = 3$. 
  The regret of~\nous~with Reduction~\ref{alg:non-centered} admits the upper bound for any $\theta \in \mathbb{R}^d$ and any $T \ge 0$: 
  \begin{equation*}
    R_T(\theta)
    \leq 7 d\sigma \sqrt T
    + \frac{2622 \, \sigma^2 d^2}{\|\theta\|_A}
    \overline \log \bigg(\frac{T\|\theta\|_A^2}{4 \sigma^2 d^2}\bigg)
    +
    2 T \|\theta \|_A e^{(2d / 3 - (1/9) \sqrt T \|\theta\|_A / \sigma)^-}
    + 392 \, d \|\theta\|_A
    \,.
  \end{equation*}
\end{theorem}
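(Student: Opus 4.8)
The plan is to avoid re-running the full three-phase analysis and instead route everything through the generic transfer result Theorem~\ref{thm:non_centered_gen} of Appendix~\ref{app:reduction}, specializing it to \nous{} via its centered guarantee Theorem~\ref{thm:full_regret_ETC}. The starting point is the reparametrization $x_t = A^{1/2} w_t + c$ with $\|w_t\|_2 \le 1$, under which the reward becomes $y_t = (A^{1/2}\theta)^\top w_t + c^\top\theta + z_t$. The only new feature relative to the centered case is the unknown scalar offset $c^\top\theta$, which biases any naive least-squares estimate of $\theta$. I would first establish that this offset can be eliminated by sampling symmetrically about the center: playing the pair $c + A^{1/2} e_j$ and $c - A^{1/2} e_j$ and differencing their rewards produces an observation of $2(A^{1/2}\theta)^\top e_j$ in which $c^\top\theta$ cancels exactly, at the price of noise of variance $2\sigma^2$. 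This is the mechanism I expect Reduction~\ref{alg:non-centered} to implement, and it reduces the estimation of $\theta$ to a centered least-squares problem with an inflated sub-Gaussian proxy.

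Next I would charge the cost of this reduction. Each symmetric pair uses two rounds rather than one, and since the per-round regrets of $c + A^{1/2} e_j$ and $c - A^{1/2} e_j$ sum to $2\|\theta\|_A$, the exploration regret per informative sample is at most doubled, while the effective noise scale inflates by a bounded factor. Substituting this inflated noise scale and the doubled round count into Theorem~\ref{thm:full_regret_ETC} reproduces the exact shape of the claimed bound: the $\sqrt T$ coefficient grows from $6$ to $7$, the constant in front of the logarithmic term inflates by the reduction factor, the argument of $\overline \log$ acquires the extra factor in its denominator coming from the inflated $\sigma^2$, the rate in the exponential term halves, and the additive $d\|\theta\|_A$ term absorbs the warm-up and boundary bookkeeping. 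Tracking these substitutions termwise is what yields the displayed constants $7$, $2622$, and $392$.

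A pleasant simplification is that the commit phase needs no special treatment. The greedy action $x^\star(\hat\theta) = c + A\hat\theta / \|\hat\theta\|_A$ depends only on the known center $c$ and on $\hat\theta$, never on the offset $c^\top\theta$, so once $\hat\theta$ is accurate the commit error is governed by Lemma~\ref{lem:commit_error} exactly as in the centered case. Thus the whole argument genuinely factors through Theorem~\ref{thm:non_centered_gen}, and the proof of Theorem~\ref{thm:noncentered} reduces to plugging the explicit centered bound of Theorem~\ref{thm:full_regret_ETC} into that generic statement and simplifying.

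The main obstacle I anticipate is bookkeeping rather than conceptual: verifying that the symmetric-pairing overhead propagates correctly through the adaptive stopping rule of phase 1. Lemma~\ref{lem:warm-up} and the threshold $U(\delta_i, n_i)$ are derived under the centered normalization $X_i^\top X_i = (n_i/d) A$, so I must check that the paired design still yields a design matrix proportional to $A$ with the correct scaling after differencing, so that the $\chi^2$-type concentration of Lemma~\ref{lem:chi_squared_concentration} applies verbatim up to the replacement of $\sigma$ by its inflated counterpart. Once this normalization is confirmed, the failure-probability and expected-length bounds of Lemma~\ref{lem:warm-up} carry over termwise, and the stated regret bound follows.
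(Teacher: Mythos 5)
Your high-level architecture (pair-and-difference reduction, inflated noise proxy, horizon halving, then reuse of the centered analysis through Theorem~\ref{thm:non_centered_gen}) is indeed the paper's route, but two concrete steps do not hold as written. First, you have misidentified Reduction~\ref{alg:non-centered}: it does \emph{not} play symmetric pairs $c + A^{1/2}e_j$ and $c - A^{1/2}e_j$; it pairs the \emph{center} with the requested action, playing $c$ and then $c + \tilde x_t$, and feeds the difference $\theta^\top \tilde x_t + (z_{2t-1} - z_{2t})$ to the centered algorithm, which is therefore run with proxy $2\sigma^2$ and horizon $T/2$. Since the theorem is a statement about \nous~combined with that specific reduction, analyzing a different pairing does not prove it. The difference is not cosmetic: your symmetric pairing doubles the signal (you get $2(A^{1/2}\theta)^\top e_j$) while the noise proxy is $2\sigma^2$, so after normalization the effective proxy is $\sigma^2/2$ and the signal-to-noise ratio is \emph{preserved}, not degraded --- your own bookkeeping (``inflated sub-Gaussian proxy'') points the wrong way. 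In particular, under your mechanism the exponential rate would remain $(2/9)\sqrt{T}\|\theta\|_A/\sigma$, whereas the claimed $(1/9)$ rate is exactly the signature of the paper's center-pairing: $(2/9)\sqrt{T/2}\,\|\theta\|_A/(\sqrt{2}\,\sigma) = (1/9)\sqrt{T}\|\theta\|_A/\sigma$. Your claim that under your scheme ``the rate in the exponential term halves'' is therefore not a consequence of your construction; the constants $7$, $2622$, $392$ are asserted by matching the target, not derived.

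Second, the composition step is not well-posed as stated: Theorem~\ref{thm:non_centered_gen} cannot consume the aggregated regret bound of Theorem~\ref{thm:full_regret_ETC}. It consumes the pair $(B, C)$ of \eqref{eq:BandC} --- the expected exploration time and the expected commit error --- which are scaled by \emph{different} factors in the transfer: the exploration count is multiplied by $2\big(\max_{x,y \in \cX}\|x-y\|\big)\|\theta\|_\star$, while the commit error is multiplied by $T$. These two quantities cannot be disentangled from the final bound of Theorem~\ref{thm:full_regret_ETC}, in which they have already been mixed; they are only available as the intermediate bounds \eqref{eq:total_exploration_bound} and \eqref{eq:commit_error_final} inside its proof, and must be evaluated at $(2\sigma^2, T/2)$ before multiplying and summing. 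Going back into the centered proof to extract exactly those two bounds is the step that your write-up replaces with ``plugging the explicit centered bound of Theorem~\ref{thm:full_regret_ETC} into that generic statement and simplifying,'' and it is where the actual derivation of the stated constants has to happen.
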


\section{Numerical Experiments}

We now compare~\nous~to state-of-the-art algorithms \algoname{OFUL} and \algoname{OLSOFUL} using numerical experiments. 
Regret of algorithms is averaged over $20$ independent runs, and both averages and confidence intervals are presented.
The source code for those numerical experiments will be released publicly after review.
We denote by $\nous(\alpha)$ the~\nous~algorithm with parameter $\alpha$, and we consider both \nous(1) and \nous(3). 
We consider action set $\mathcal{X}$ as the unit ball and a time horizon of $T=10^4$. 
The confidence ellipsoids in \algoname{OFUL} and \algoname{OLSOFUL} require a confidence parameter $\delta$, which can be interpreted as a probability of failure, and we set it to $\delta =1/T$, to ensure that the regret caused by failure does not scale linearly in $T$.

We use a commercially available optimization solver \cite{gurobi} to handle the bilinear maximization problem $\max_{(x,\theta)} x^\top \theta$ subject to $(x,\theta) \in \mathcal{X} \times \mathcal{C}_t$ with $\mathcal{C}_t$ the confidence ellipsoid at time $t$, which must be solved at each step to implement \algoname{OFUL} and \algoname{OLSOFUL}. 
We limit Gurobi's available time to $1$ second per time step, which already amounts to $2.7$ hours per run, that is $2.5$ days to average the regret over $20$ independent runs.
Gurobi is allowed to use all $20$ cores of an i9-129000H processor, and outputs the best solution found within the allotted time, with an early stop if the estimated relative gap to optimality is less than $1\%$.

\paragraph{Regret $R_T(\theta)$ vs norm of $\theta$}
We compare in Figure~\ref{fig:norm_comparison} the final regret of the algorithms for
different values of $\|\theta\|_2$ chosen as $\|\theta\|_{2} \in \{ d/\sqrt{T}, 0.1, 1,
10, 25, 50\}$. \algoname{OFUL} requires an input parameter $S$ which should be an upper
bound for $\|\theta\|$. We set $S$ as half the range norms so that $S = 25$.
Due to the enormous computation complexity of optimistic algorithms, we limit ourselves to a small dimension $d=3$: a larger dimension would already require over a week of computation to generate meaningful results.

\begin{figure}[!htbp]
    \centering    
    \subfloat[Large norms]{
        \includegraphics[width=0.49\textwidth]{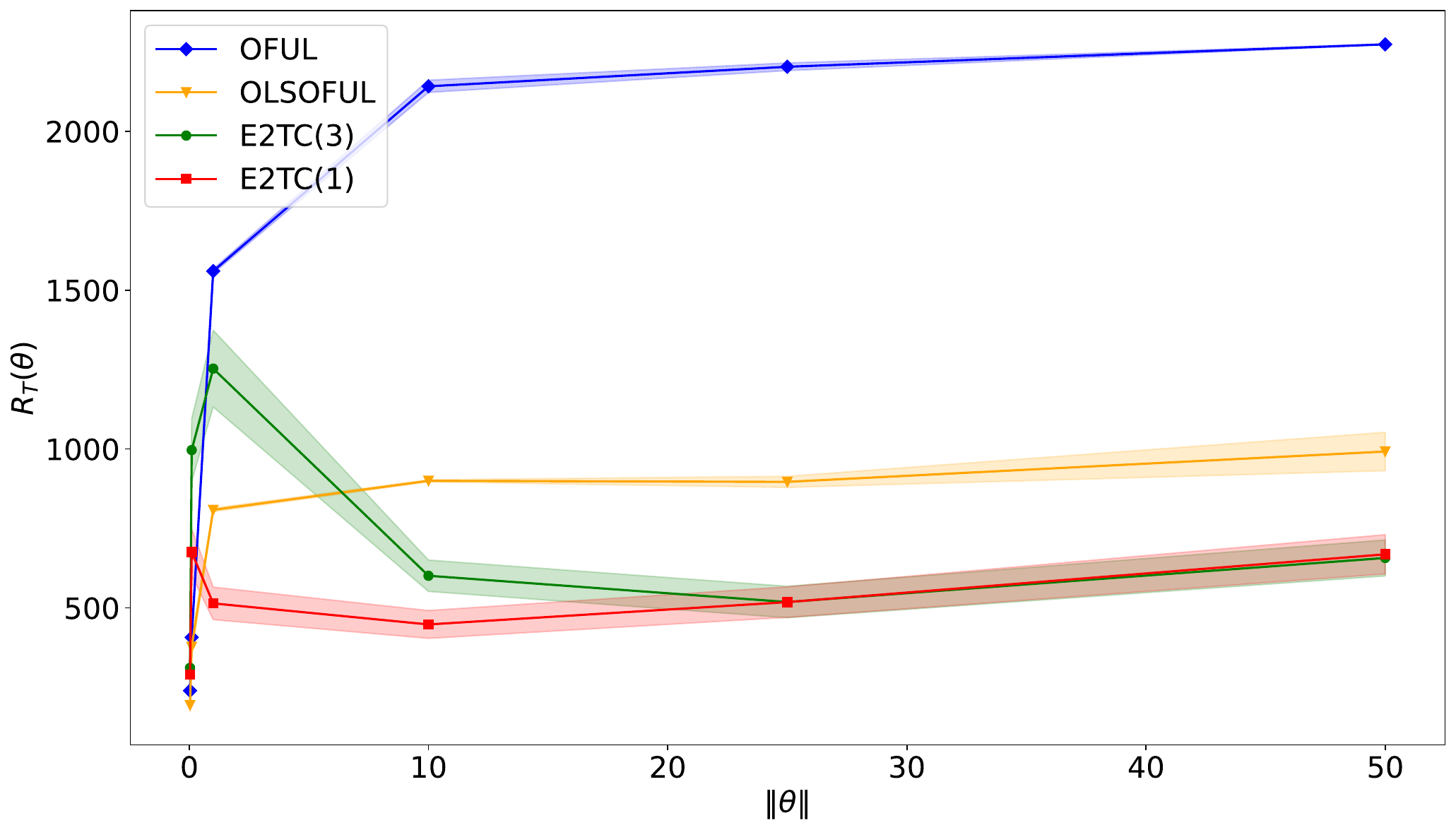}
        }
    \subfloat[Small norms]{
        \includegraphics[width=0.49\textwidth]{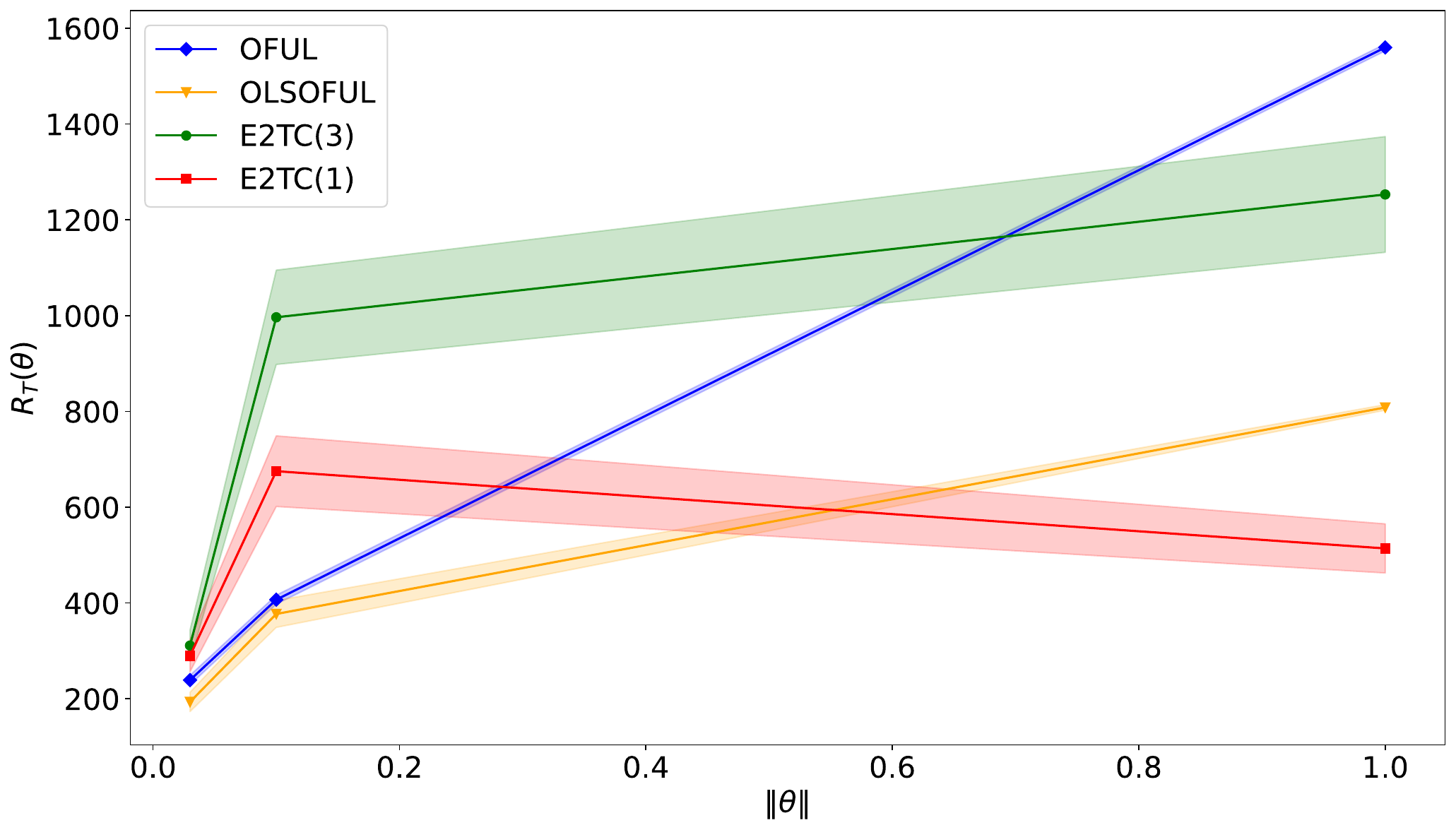}
        } 
    \caption{Comparison of the final regret of~\nous, \algoname{OFUL} and \algoname{OLSOFUL} for different norms.}
    \label{fig:norm_comparison}
\end{figure}

\paragraph{Regret $R_T(\theta)$ vs dimension $d$} We compare in Figure~\ref{fig:dimension_comparison} the final regret of the algorithms for different values of $d$ and we choose $\theta$ with a fixed norm $\|\theta\|_{2} = 10$.
We consider dimensions $2 \le d \le 90$. 
We could not make the optimistic algorithms work in reasonable time for $d > 4$, so for high dimensions, solely~\nous~is presented. 
As suggested by our theoretical results, the regret of~\nous~seems to scale linearly with the dimension.
\begin{figure}[!htbp]
    \centering    
    \subfloat[Large dimensions]{\includegraphics[width=0.49\textwidth]{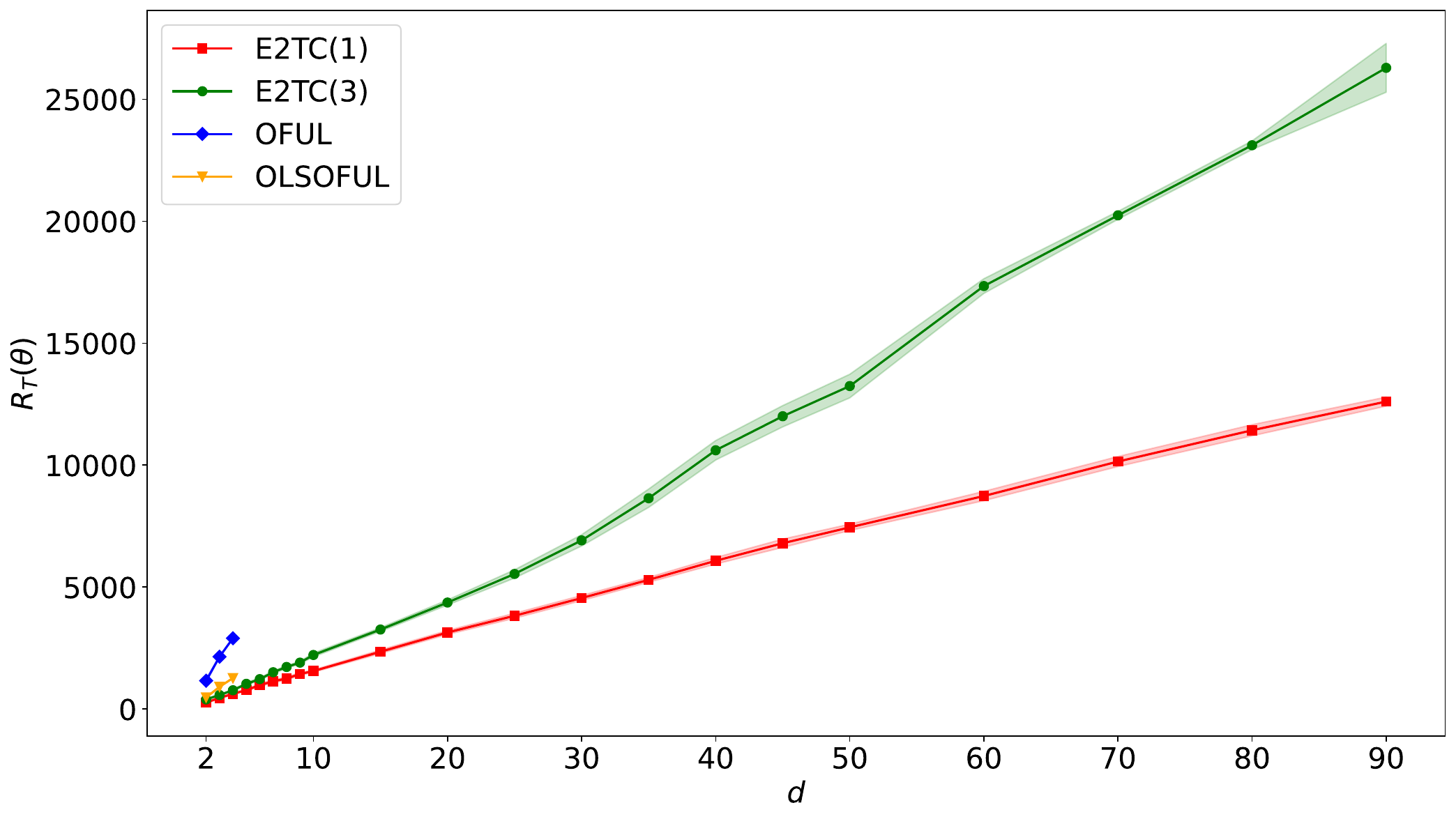}}
    \subfloat[Small dimensions]{\includegraphics[width=0.49\textwidth]{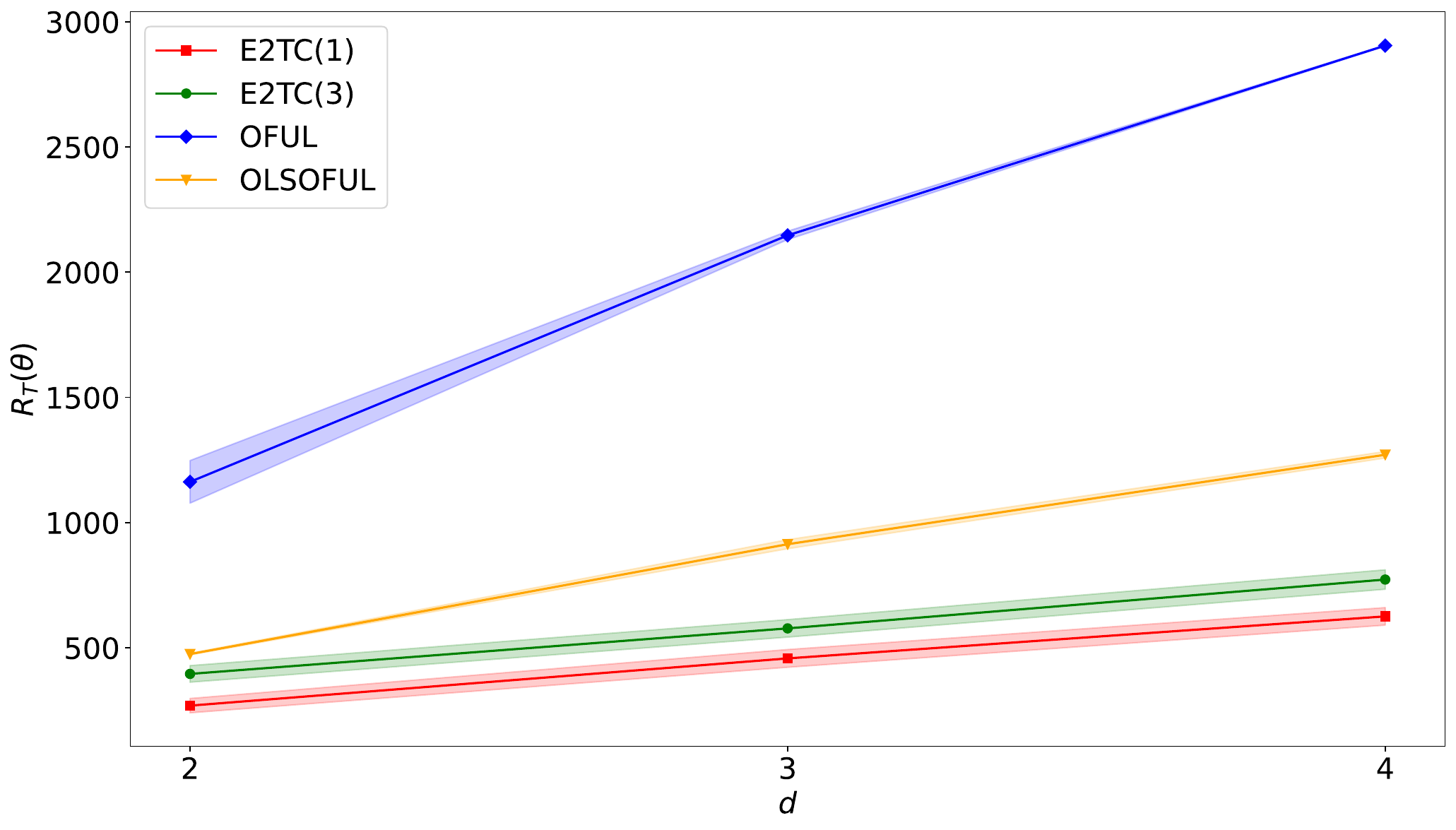}} 
    \caption{Final regret of~\nous, \algoname{OFUL} and \algoname{OLSOFUL} as a function
    of the dimension $d$.}
    \label{fig:dimension_comparison}
\end{figure}

\paragraph{Runtime} We compare the runtime (CPU time) of~\nous~and \algoname{OFUL} for different norms $\|\theta\|$, where dimension is $d=3$ in Fig.\ref{fig:norm_runtime} and for different dimensions $d$ and fixed norm $10$ in Fig.\ref{fig:dimension_runtime}. 
Runtime includes the time to compute the action to be played, the time used to solve the bilinear optimization problem, and the time to update the estimated parameters $\hat{\theta}_t$ using the Sherman Morrison algorithm for the inversion of the design matrix.
We present the ratio between the runtime of \algoname{OFUL} and that of~\nous~, and observe that \algoname{OFUL} is slower by about $5$ orders of magnitude, an enormous difference.This clearly illustrates the lower computational complexity of~\nous~with respect to optimistic algorithms. 
Also, for small norms in dimension $3$ and for dimension greater than $4$, it is harder for Gurobi to find a good solution, and it maxes out its allotted time.

\begin{figure}
    \centering
    
    \subfloat[Norm runtime comparison]
        {\begin{tabular}{|| c | c | c | c | c ||} 
         \hline
         $\|\theta\|$  & 0.1 & 1 & 10 & 50 \\ [0.5ex] 
         \hline\hline
         Runtime Ratio &  $6.1 \times 10^4$ & $1.0\times10^5$ & $1.7\times10^5$ & $8.7 \times 10^4$ \\ [1ex] 
         \hline
        \end{tabular}
        \label{fig:norm_runtime}
        }

    \subfloat[Dimension runtime comparison]
        {\begin{tabular}{||c | c | c | c||} 
         \hline
         $d$ & 2 & 3 & 4 \\ [0.5ex] 
         \hline\hline
         Runtime Ratio & $1.2 \times 10^4$ & $1.7\times 10^5$ & $6.7 \times 10^5$  \\ [1ex] 
         \hline
        \end{tabular}
        \label{fig:dimension_runtime}
        }
     \caption{Runtime comparison of~\nous~and \algoname{OFUL} for different norms and dimensions.}
    \label{fig:runtime}
\end{figure}

\section{Conclusion}\label{sec:Conclusion}
We have proposed the first known provably minimax optimal algorithm for linear stochastic bandits over ellipsoids called~\nous. 
We derived a new information-theoretic lower bound on the regret of any algorithm equal to $\Omega(\min(d \sigma \sqrt{T}, \|\theta\|_{A} T))$, and showed that the regret of~\nous~matches this bound so that it is minimax optimal. 
The~\nous~algorithm is locally asymptotically minimax optimal, a stronger optimality notion than classical minimax optimality.
Also~\nous~is highly computationally efficient, requiring time $O(dT + d^2\log(T/d) + d^3)$ and memory $O(d^2)$, unlike optimistic algorithms for which, to the best of our knowledge, no polynomial time implementation exists, even when the action set is a unit ball.
Numerical experiments confirm our theoretical findings both in terms of regret as well as running time. 
An interesting direction for future research is whether or not the general structure of ~\nous~can be adapted to solve other structured bandit problems.

\subsection*{Acknowledgements}
This paper is supported by the CHIST-ERA Wireless AI 2022 call MLDR project
(ANR-23-CHR4-0005), partially funded by AEI and NCN under projects PCI2023-145958-2 and
2023/05/Y/ST7/00004, respectively.

\bibliographystyle{plainnat} 
\bibliography{linbandits} 

\begin{thebibliography}{15}
\providecommand{\natexlab}[1]{#1}
\providecommand{\url}[1]{\texttt{#1}}
\expandafter\ifx\csname urlstyle\endcsname\relax
  \providecommand{\doi}[1]{doi: #1}\else
  \providecommand{\doi}{doi: \begingroup \urlstyle{rm}\Url}\fi

\bibitem[Abbasi-Yadkori et~al.(2011)Abbasi-Yadkori, P{\'a}l, and Szepesv{\'a}ri]{abbasi2011improved}
Yasin Abbasi-Yadkori, D{\'a}vid P{\'a}l, and Csaba Szepesv{\'a}ri.
\newblock Improved algorithms for linear stochastic bandits.
\newblock \emph{Advances in neural information processing systems}, 24, 2011.

\bibitem[Abeille and Lazaric(2017)]{abeille_2017}
Marc Abeille and Alessandro Lazaric.
\newblock {Linear Thompson Sampling Revisited}.
\newblock In \emph{Proceedings of the 20th International Conference on Artificial Intelligence and Statistics}, volume~54 of \emph{Proceedings of Machine Learning Research}, pages 176--184. PMLR, 20--22 Apr 2017.

\bibitem[Banerjee et~al.(2022)Banerjee, Ghosh, Chowdhury, and Gopalan]{banerjee2022exploration}
Debangshu Banerjee, Avishek Ghosh, Sayak~Ray Chowdhury, and Aditya Gopalan.
\newblock Exploration in linear bandits with rich action sets and its implications for inference.
\newblock \emph{arXiv preprint arXiv:2207.11597}, 2022.

\bibitem[Dani et~al.(2008)Dani, Hayes, and Kakade]{dani_stochastic_2008}
Varsha Dani, Thomas~P Hayes, and Sham~M Kakade.
\newblock Stochastic {Linear} {Optimization} under {Bandit} {Feedback}.
\newblock In \emph{Conference on Learning Theory}, 2008.

\bibitem[Gales et~al.(2022)Gales, Sethuraman, and Jun]{gales2022norm-agn}
Spencer~B Gales, Sunder Sethuraman, and Kwang-Sung Jun.
\newblock Norm-agnostic linear bandits.
\newblock In \emph{International Conference on Artificial Intelligence and Statistics}, pages 73--91. PMLR, 2022.

\bibitem[Garivier et~al.(2016)Garivier, Lattimore, and Kaufmann]{garivier2016explore}
Aurelien Garivier, Tor Lattimore, and Emilie Kaufmann.
\newblock On explore-then-commit strategies.
\newblock In D.~Lee, M.~Sugiyama, U.~Luxburg, I.~Guyon, and R.~Garnett, editors, \emph{Advances in Neural Information Processing Systems}, volume~29. Curran Associates, Inc., 2016.

\bibitem[{Gurobi Optimization, LLC}(2024)]{gurobi}
{Gurobi Optimization, LLC}.
\newblock {Gurobi Optimizer Reference Manual}, 2024.
\newblock URL \url{https://www.gurobi.com}.

\bibitem[H{\'a}jek(1972)]{hajek1972local}
Jaroslav H{\'a}jek.
\newblock Local asymptotic minimax and admissibility in estimation.
\newblock In \emph{Proceedings of the sixth Berkeley symposium on mathematical statistics and probability}, volume~1, pages 175--194, 1972.

\bibitem[Hoff(2009)]{hoff2009first}
Peter~D Hoff.
\newblock \emph{A first course in Bayesian statistical methods}, volume 580.
\newblock Springer, 2009.

\bibitem[Hsu et~al.(2011)Hsu, Kakade, and Zhang]{hsu_tail_2011}
Daniel Hsu, Sham~M. Kakade, and Tong Zhang.
\newblock A tail inequality for quadratic forms of subgaussian random vectors, October 2011.
\newblock arXiv:1110.2842.

\bibitem[Jun and Kim(2024)]{jun2024noise}
Kwang-Sung Jun and Jungtaek Kim.
\newblock Noise-adaptive confidence sets for linear bandits and application to bayesian optimization.
\newblock \emph{arXiv preprint arXiv:2402.07341}, 2024.

\bibitem[Lattimore and Szepesv{\'a}ri(2020)]{lattimore_bandit_2020}
Tor Lattimore and Csaba Szepesv{\'a}ri.
\newblock \emph{Bandit {Algorithms}}.
\newblock Cambridge University Press, 1 edition, July 2020.
\newblock ISBN 978-1-108-57140-1 978-1-108-48682-8.
\newblock \doi{10.1017/9781108571401}.

\bibitem[Laurent and Massart(2000)]{laurent2000}
B.~Laurent and P.~Massart.
\newblock {Adaptive estimation of a quadratic functional by model selection}.
\newblock \emph{The Annals of Statistics}, 28\penalty0 (5):\penalty0 1302 -- 1338, 2000.
\newblock \doi{10.1214/aos/1015957395}.

\bibitem[Rusmevichientong and Tsitsiklis(2010)]{rusmi_linearly_2010}
Paat Rusmevichientong and John~N. Tsitsiklis.
\newblock Linearly parameterized bandits.
\newblock \emph{Mathematics of Operations Research}, 35\penalty0 (2):\penalty0 395--411, 2010.
\newblock ISSN 0364765X, 15265471.

\bibitem[Zhu and Nowak(2022)]{zhu2022pareto}
Yinglun Zhu and Robert Nowak.
\newblock Pareto optimal model selection in linear bandits.
\newblock In \emph{International Conference on Artificial Intelligence and Statistics}, pages 6793--6813. PMLR, 2022.

\end{thebibliography}

\newpage 
\appendix

\section{Proofs for Section~\ref{sec:alg}}
\label{app:proofs_alg}
\subsection{Proof of Theorem~\ref{thm:full_regret_ETC}}
\label{app:proof_main_thm}

\begin{proof}
  We use the natural template for the analysis of Explore-Then-Commit (ETC) algorithms.
  The first step is to use the Cauchy-Schwarz inequality to bound the instantaneous
  regret $\theta^\top(x^\star(\theta) - x_t) \leq 2\|\theta\|_A$ on all the exploration
  rounds:
  \[
    R_T(\theta)
    \leq 
    2 \|\theta\|_A \E[N_\warmup + N_{e}(\hat B)]
    + T\, \E\big[\theta^\top (x^\star(\theta) - x^\star(\hat \theta))\big]
    \,.
  \]
  We shall then control separately the expected total number of exploration rounds,  
  \begin{equation}
    \label{eq:total_exploration_bound}
    \E[N_\warmup + N_{e}(\hat B)]
     \leq \frac{2d\sigma \sqrt T}{\|\theta\|_A} 
      + \frac{328 \, \sigma^2 d^2}{\|\theta\|_A^2}  
         \overline \log \bigg(\frac{T\|\theta\|_A^2}{\sigma^2 d^2}\bigg)  + 97 d 
      \, ,
  \end{equation}
  and the expected cost of the commit rounds, 
  \begin{equation}
    \label{eq:commit_error_final}
    \E\big[\theta^\top (x^\star(\theta) - x^\star(\hat \theta))\big]
    \leq 
    \frac{3 d \sigma}{2\sqrt T} 
    +
    2 \|\theta \|_A e^{(2d / 3 - (2/9) \sqrt T \|\theta\|_A / \sigma)^-}
    +
    \frac{328 \, \sigma^2 d^2}{T \|\theta\|_A}  
      \overline \log \bigg(\frac{T\|\theta\|_A^2}{\sigma^2 d^2}\bigg)  
      + 96 \frac{d \|\theta\|_A}{T} 
    \,.
  \end{equation}
  The three bounds combined yield the claimed result. The first step in the proof of both
  of these bounds is to control the warm-up behavior. This is done via
  Lemma~\ref{lem:warm-up} stated above and proved below.
  It then suffices to analyze the algorithm on the event that the warm-up was successful.
  We thus define the corresponding good event $\cG$ as
  \begin{equation*}
    \cG = \bigg\{
      \frac{\|\theta\|_A}{2} \leq \hat B \leq \frac{3\|\theta\|_A}{2}
  \bigg\} \,.
  \end{equation*}

\paragraph{Exploration rounds}
On the good event $\cG$, the estimate $\hat B$ is at least $\|\theta\| / 2$, so 
\[
 N_{e}(\hat B) \mathds{1} \{ \cG \}
 = d \bigg\lceil\frac{\sigma \sqrt T}{\hat B} \bigg\rceil \mathds{1} \{ \cG \}
 \leq d \bigg\lceil\frac{2\sigma \sqrt T}{\|\theta\|_A} \bigg\rceil 
 \leq d \frac{2\sigma \sqrt T}{\|\theta\|_A} + d
 \,.
\]
And therefore, by Lemma~\ref{lem:warm-up}, in expectation, bounding $N_e(\hat B)$
by the worst-case $T$ on $\bar \cG$, 
\[
 \E[N_e(\hat B)]
 \leq d \frac{2\sigma \sqrt T}{\|\theta\|_A} + d
 + T \P\big[\, \overline \cG \, \big]
 \leq d \frac{2\sigma \sqrt T}{\|\theta\|_A} + d
  +  \frac{164 \, \sigma^2 d^2}{\|\theta\|_A^2}  
         \overline \log \bigg(\frac{T\|\theta\|_A^2}{ \sigma^2 d^2}\bigg)  + 48 d 
  \,.
\]

\paragraph{Commit error}

We isolate the guarantees for the least-squares estimator in
Lemma~\ref{lem:commit_error}, which we restate and prove below.
Given this lemma, by the tower rule, and since $\cG$ only depends on the exploration
rounds (thus is measurable with respect to the $\sigma$-algebra generated by actions and
observations in the exploration rounds), 
\begin{multline*}
     \E[
        \theta^\top (x^\star(\theta) - x^\star(u))
        \mathds{1} \{ \cG \}
    ]
    = \E[ \E_\expl[
        \theta^\top (x^\star(\theta) - x^\star(u))
        ]
        \mathds{1} \{ \cG \}
      ] \\
    \leq
    \E \bigg[\bigg( \frac{d^2 \sigma^2}{\|\theta\|_A N_e} 
      +
    2 \|\theta \|_A e^{(2d / 3 - N_e \|\theta\|^2 / (3 \sigma^2 d))^-}
  \bigg)
        \mathds{1} \{ \cG \}
    \bigg]  
    \leq 
    \frac{3 d \sigma}{2\sqrt T} 
    + 
    2 \|\theta \|_A e^{(2d / 3 - (2/9) \sqrt T \|\theta\|_A / \sigma)^-}
  \,, 
\end{multline*}
where we used the fact that $\hat B \leq 3  \|\theta\|_A / 2$ on the event $\cG$, and thus that $N_e \geq d \sigma \sqrt T / (3 \|\theta\|_A / 2)$
\[
    \E[
      \theta^\top (x^\star(\theta) - x^\star(u))
    ]
    \leq
     \E[
        \theta^\top (x^\star(\theta) - x^\star(u))
        \mathds{1} \{ \cG \}
    ]
    + 2\|\theta\|_A \P\big[\, \overline \cG\, \big] 
    \,.
\]
The claimed bound follows by applying the previously shown inequalities.
\end{proof}

\begin{lemma*}\!\emph{\ref{lem:commit_error}} \;
  If $\hat \theta$ is the least-squares estimator after $N_e$ rounds of exploration, then 
  conditionally on the exploration rounds,
    \[
    \E_{\expl} \big[\theta^\top \big(x^\star(\theta) - x^\star(\hat \theta)\big)\big]
    \leq 
    \frac{d^2 \sigma^2}{\|\theta\|_A N_e} 
    + 
        2 \|\theta \|_A e^{(2d / 3 - N_e \|\theta\|^2 / (3 \sigma^2 d))^-}
    \,.
    \]
\end{lemma*}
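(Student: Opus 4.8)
The plan is to combine the \emph{quadratic} scaling of the commit error with a tail bound on the least-squares error, splitting the conditional expectation according to whether the estimation error is small or large. Throughout, write $u = A^{1/2}\theta$ and $\hat u = A^{1/2}\hat\theta$, so that in the centered case $x^\star(\theta) = A^{1/2}u/\|u\|$ and $x^\star(\hat\theta) = A^{1/2}\hat u/\|\hat u\|$, and the commit error reduces to $\Delta := \theta^\top(x^\star(\theta) - x^\star(\hat\theta)) = \|\theta\|_A(1 - \cos\phi)$, where $\phi$ is the angle between $u$ and $\hat u$. Cauchy--Schwarz already gives the crude cap $\Delta \le 2\|\theta\|_A$. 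For the sharp estimate, let $p := u^\top\hat u/\|\hat u\|$ be the projection of $u$ onto the direction of $\hat u$; since $\hat u$ is parallel to that direction, $\|\hat\theta - \theta\|_A^2 = \|\hat u - u\|^2 \ge \|\theta\|_A^2 - p^2 = (\|\theta\|_A - p)(\|\theta\|_A + p)$, so whenever $p \ge 0$ we obtain $\Delta = \|\theta\|_A - p \le \|\hat\theta - \theta\|_A^2/\|\theta\|_A$, which is the content of Lemma~\ref{lem:ub_inst_regret_square}. Crucially, $p < 0$ forces $\|\hat\theta - \theta\|_A > \|\theta\|_A$, so the quadratic bound is valid \emph{exactly} on the event $\{\|\hat\theta - \theta\|_A \le \|\theta\|_A\}$, and on its complement I fall back on the cap $2\|\theta\|_A$.

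Next I would convert these deterministic bounds into a bound on $\E_\expl[\Delta]$. Since phases 1 and 2 play the axes $A^{1/2}e_j$ round-robin over a number of rounds that is a multiple of $d$, the design satisfies $X^\top X = (N_e/d)A$, hence $\|\hat\theta - \theta\|_A^2 = (d/N_e)\,S$ with $S := \|\hat\theta - \theta\|_{X^\top X}^2$ and $\hat\theta - \theta = (X^\top X)^{-1}X^\top Z$, where $Z$ is the (only) remaining randomness, namely the exploration noise. The threshold event becomes $\{S \le \tau\}$ with $\tau := N_e\|\theta\|_A^2/d$, and splitting gives
\[
\E_\expl[\Delta]
\le \frac{d}{N_e\|\theta\|_A}\,\E_\expl\!\big[S\,\mathds 1\{S \le \tau\}\big]
+ 2\|\theta\|_A\,\P_\expl[S > \tau]\,.
\]

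For the first term I use $S = Z^\top P Z$ with $P = X(X^\top X)^{-1}X^\top$ the orthogonal projection onto the column space of $X$, so $\Tr P = d$; because the noise is independent, centered and $\sigma^2$-subgaussian, $\E[Z_i^2]\le\sigma^2$, whence $\E_\expl[S\,\mathds 1\{S\le\tau\}] \le \E_\expl[S] = \sum_i P_{ii}\E[Z_i^2] \le \sigma^2\Tr P = \sigma^2 d$, producing the leading term $d^2\sigma^2/(\|\theta\|_A N_e)$. For the tail term I apply Lemma~\ref{lem:chi_squared_concentration}: using $2\sqrt{dx}\le d + x$ turns its bound into $\P[S \ge \sigma^2(2d + 3x)] \le e^{-x}$, and choosing $x$ so that $\sigma^2(2d + 3x) = \tau$ gives $\P_\expl[S > \tau] \le e^{2d/3 - N_e\|\theta\|_A^2/(3\sigma^2 d)}$; capping the probability at $1$ replaces the exponent by its negative part $(\cdot)^-$. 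Adding the two contributions yields the stated bound.

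The main obstacle is the first step: establishing the quadratic bound with constant exactly $1$ and, above all, pinning down the precise event on which it is valid. A naive chord-length estimate $\|u/\|u\| - \hat u/\|\hat u\|\| \le 2\|\hat u - u\|/\|u\|$ only yields constant $2$, which would double the leading term and break the match with the lower bound; the improvement relies on the sign condition $p \ge 0$ together with the geometric fact that $p < 0$ already entails $\|\hat\theta - \theta\|_A > \|\theta\|_A$. This is exactly what lets the single threshold $S \le \tau$ simultaneously govern the validity of the quadratic bound and the tail probability, so that the two regimes of the split align perfectly. Everything afterwards is routine: the trace identity for $\E_\expl[S]$ and the algebraic reshaping of the sub-Gaussian tail into the displayed exponential are mechanical.
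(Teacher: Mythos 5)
Your proof is correct and takes essentially the same route as the paper's: the same split into a bad event (handled by the Cauchy--Schwarz cap $2\|\theta\|_A$) and a good event where the quadratic bound of Lemma~\ref{lem:ub_inst_regret_square} applies, the same trace computation giving $\E[\|\hat\theta-\theta\|_A^2] \leq d^2\sigma^2/N_e$ from $X^\top X = (N_e/d)A$, and the same reshaping of Lemma~\ref{lem:chi_squared_concentration} via $2\sqrt{dx}\le d+x$ to produce the exponent $2d/3 - N_e\|\theta\|_A^2/(3\sigma^2 d)$. The only cosmetic difference is that you split on the event $\{\|\hat\theta-\theta\|_A \le \|\theta\|_A\}$ while the paper splits on the sign of $\theta^\top A\hat\theta$ and then relates the two events exactly as you do; the resulting bounds are identical.
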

\begin{proof}
  To ease notation, we remove the subscript on $\E_{\expl}$ inside this proof; all
  expectations and probability are taken conditionally on the randomness of the
  exploration rounds.
  At the end of the exploration phase, consider the matrices $Y \in \R^{N_e}$, and $X \in
  \R^{N_e \times d} $, defined by stacking the exploratory actions and responses so that
  \[ 
    Y = X \theta + Z
    \quad \text{ and } \quad
    \hat \theta = (X^\top X)^{-1}X^\top Y
    \,. 
  \]
  Let us start by evacuating very bad cases, in which the estimator is completely off. 
  Depending on whether $\theta^\top A \hat \theta \leq 0$, we upper bound the
  instantaneous regret by either $2\|\theta\|_A$, or using Lemma~\ref{lem:ub_inst_regret_square},
  \begin{multline}
    \label{eq:inst_reg_very_bad}
    \E \big[\theta^\top \big(x^\star(\theta) - x^\star(\hat \theta)\big)\big]
    \leq 
    2 \|\theta\|_A \P\big[\theta^\top A \hat \theta \leq 0\big]
    + 
    \frac{1}{\|\theta\|_A}\E \big[\|\theta - \hat \theta \|_A^2\big] \\
    \leq 
    2 \|\theta\|_A \P[\| \hat \theta - \theta \|_A \geq \|\theta\|_A]
    + 
    \frac{1}{\|\theta\|_A}\E \big[\|\theta - \hat \theta \|_A^2\big]
    \,.
  \end{multline}
  The second term is essentially the mean-squared error of the least-squares estimator
  \[ 
    \E [\|\theta - \hat \theta \|_A^2]
    = \E [ \|(X^\top X)^{-1} X^\top Z \|_A^2]
    = \E [ \|A^{1/2}(X^\top X)^{-1} X^\top Z \|^2]
  \]
  Now note that for any matrix $M \in \R^{N_e \times d}$, since the covariance matrix of
  $Z$ is diagonal with entries in $[0,\sigma^2]$, we have
  \[ 
    \E[\|M Z \|^2]
    = \E [\Tr( Z Z^\top M^\top M) ] 
    =  \sum_{t \in \Texpl} \E[z_t^2] (M^\top M)_{t, t}
    \leq \sigma^2 \Tr(M^\top M) \,.
  \]
  In particular, taking $M = A^{1/2}(X^\top X)^{-1} X^\top$,  
  \[
    \E [ \|A(X^\top X)^{-1} X^\top Z \|^2]
    \leq \sigma^2 \Tr\big(A (X^\top X)^{-1}\big) 
    = \frac{d^2\sigma^2}{N_e}  
    \,, 
  \]
  where we used the fact that $X^\top X = (N_e / d) A$ to obtain the final equality. 

  To treat the first term in \eqref{eq:inst_reg_very_bad}, we use again
  the sub-gaussian concentration, Lemma~\ref{lem:chi_squared_concentration}, to obtain
  \begin{equation}
    \label{eq:large_error_estimation}
    \P[\| \hat \theta - \theta \|_A \geq \|\theta\|_A]
    \leq
    e^{2d / 3 - N_e \|\theta\|^2 / (3 \sigma^2 d)}  \,.
  \end{equation}
  To see this, note that for any $x \geq 0$, applying the inequality $2\sqrt{xd}
  \leq  x + d$, we have
  \[
    \frac{\sigma^2 d}{N_e} \big( d + 2 \sqrt{xd} +  2x \big)
      \leq \frac{\sigma^2 d}{N_e} \big( 2d + 3x \big)
      \,.
  \]
  Then consider $x = N_e \|\theta\|_A^2 / (3 d \sigma^2) - 2d / 3$, so that 
  $\frac{\sigma^2 d}{N_e} \big( 2d + 3x \big)  = \|\theta \|_A^2$. If $x \leq 0$, then the
  claim~\eqref{eq:large_error_estimation} holds trivially as a probability is less than
  $1$. Otherwise, we apply subgaussian concentration
  (Lemma~\ref{lem:chi_squared_concentration}) to get
  \begin{align*}
    \P\big[\|\hat \theta - \theta\|_{A}^2 &\geq 
    \|\theta\|_A^2
    \big] 
    = 
    \P\big[\|\hat \theta - \theta\|_{X^\top X}^2 
      \geq (\sigma^2 d  / N_e) \big( 2d + 3x \big)
    \big] \\
    & \leq 
    \P\big[\|\hat \theta - \theta\|_{X^\top X}^2 \geq 
    (\sigma^2 d^2  / N_e ) \big( d + 2 \sqrt{dx} +  2 x \big)
    \big] \\
    & \leq e^{-x} 
    = e^{2d / 3 - N_e \|\theta\|^2 / (3 \sigma^2 d)}  
    \,. \qedhere
  \end{align*}
\end{proof}

\begin{lemma*}\!\emph{\ref{lem:warm-up}} \;
Let $\hat B$ be the output of the warm-up, then,
    \[
        \P\Big[
            \hat B  \notin \big[  (1 / 2)\|\theta\|_A, (3 / 2)\|\theta\|_A\big]
            \Big]
        \leq \frac{164 \, \sigma^2 d^2}{T \|\theta\|_A^2}  
        \overline \log \bigg(\frac{T\|\theta\|_A^2}{d^2 \sigma^2}\bigg)  
        + 48 \frac{d}{T} \,, 
    \]
    and the expected number of time-steps until the end the warm-up is 
    \[
    \E[N_{\warmup}]
        \leq \frac{164 \, \sigma^2 d^2}{\|\theta\|_A^2}  
        \overline \log \bigg(\frac{T\|\theta\|_A^2}{\sigma^2 d^2}\bigg) + 48d
        \,.
    \]
\end{lemma*}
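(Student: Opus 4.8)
The plan is to work on a family of per-phase concentration events and to reduce the whole statement to two deterministic consequences of the stopping rule, so that a single union bound over phases suffices. In subphase $i$ the round-robin exploration makes the design exact, $X_i^\top X_i = (n_i/d)A$, so Lemma~\ref{lem:chi_squared_concentration} applied with $x=\log(1/\delta_i)$ gives the event $\cG_i := \{\|\hat\theta_i - \theta\|_A \le U(\delta_i,n_i)\}$ with $\P[\cG_i^c]\le\delta_i$ --- which is precisely what the definition~\eqref{eq:Udeltan-def} of $U$ is built to yield. With $\alpha=3$ I would record two facts. \textbf{Soundness:} if the warm-up stops at phase $i$, so that $\hat B=\|\hat\theta_i\|_A > 3U(\delta_i,n_i)$, and $\cG_i$ holds, then $\|\hat\theta_i-\theta\|_A\le U(\delta_i,n_i)<\hat B/3$, whence $|\hat B-\|\theta\|_A|<\hat B/3$ and therefore $\hat B\in(\tfrac34\|\theta\|_A,\tfrac32\|\theta\|_A)\subseteq[\tfrac12\|\theta\|_A,\tfrac32\|\theta\|_A]$. \textbf{Completeness:} let $i^\star$ be the first phase with $4U(\delta_i,n_i)\le\|\theta\|_A$; on $\cG_{i^\star}$ one has $\|\hat\theta_{i^\star}\|_A\ge\|\theta\|_A-U(\delta_{i^\star},n_{i^\star})\ge\tfrac34\|\theta\|_A\ge 3U(\delta_{i^\star},n_{i^\star})$, so the stopping criterion fires no later than phase $i^\star$, i.e. the random stopping phase satisfies $\hat\iota\le i^\star$.

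For the failure probability, soundness gives $\{\hat B\ \text{bad}\}\subseteq\cG_{\hat\iota}^c=\bigcup_i(\{\hat\iota=i\}\cap\cG_i^c)$, which handles the too-large and too-small cases simultaneously. I would split this union at $i^\star$: each term with $i\le i^\star$ is bounded by $\P[\cG_i^c]\le\delta_i$, whereas $\sum_{i>i^\star}\P[\hat\iota=i,\cG_i^c]\le\P[\hat\iota>i^\star]\le\P[\cG_{i^\star}^c]\le\delta_{i^\star}$ by completeness. Thus $\P[\hat B\ \text{bad}]\le\sum_{i\le i^\star}\delta_i+\delta_{i^\star}$, and since $\delta_i$ is proportional to $n_i$ and the $n_i$ double, the sum is at most $2\delta_{i^\star}$, giving $\P[\hat B\ \text{bad}]=O(\delta_{i^\star})$.

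The expected length is shorter still: $N_{\warmup}=T_{\hat\iota}$, so splitting at $i^\star$ and bounding $T_{\hat\iota}\le T$ on the complementary event, $\E[N_{\warmup}]\le T_{i^\star}+T\,\P[\hat\iota>i^\star]\le T_{i^\star}+T\delta_{i^\star}$, again by completeness. Both bounds now reduce to estimating $n_{i^\star}$, $T_{i^\star}=\sum_{i\le i^\star}n_i$, and $\delta_{i^\star}$ in terms of $(\sigma,d,\|\theta\|_A,T)$. By the definition of $i^\star$ and geometric doubling, $n_{i^\star}$ is within a factor $2$ of the least $n$ solving $\frac{\sigma^2 d^2}{n}\big(1+2\sqrt{\tfrac1d\log\tfrac1{\delta}}+\tfrac2d\log\tfrac1{\delta}\big)=\|\theta\|_A^2/16$; bounding the parenthesised inflation factor crudely by $O(\overline{\log}(1/\delta_{i^\star}))$ and using $\log(1/\delta_{i^\star})=\Theta(\log(T\|\theta\|_A^2/(\sigma^2 d^2)))$ produces the factor $\overline{\log}(T\|\theta\|_A^2/(\sigma^2 d^2))$ multiplying the leading $\sigma^2 d^2/\|\theta\|_A^2$ term. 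Since $T_{i^\star}$ and $T\delta_{i^\star}$ are both of order $n_{i^\star}$, tracking the constants through these estimates delivers the stated bounds. The regime where $U$ never falls to $\|\theta\|_A/4$ within the horizon (very small $\|\theta\|_A$) is handled separately by noting that there both right-hand sides exceed their trivial caps, $T$ for the length and $1$ for the probability, so the bounds hold vacuously (and $i^\star$ may be taken as the last phase).

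The crux is the interplay between the random stopping index $\hat\iota$ and the fact that the per-phase budget $\delta_i$ \emph{increases} with $i$, which rules out a naive union bound $\sum_i\delta_i$ over all phases. Soundness and completeness are exactly what defuse this: completeness pins $\hat\iota$ inside $\{\le i^\star\}$ outside an event of probability $\delta_{i^\star}$, so only the phases up to $i^\star$ --- where the cumulative budget is geometric and hence $O(\delta_{i^\star})$ --- can ever contribute, while soundness converts ``bad estimate'' into ``concentration failed at the realized stopping phase'' without any case analysis on $\hat\iota$. The remaining work is routine: keeping the design identity $X_i^\top X_i=(n_i/d)A$ exact so that Lemma~\ref{lem:chi_squared_concentration} applies with the intended constants, and extracting the $\overline{\log}$ factor from the confidence-width inflation.
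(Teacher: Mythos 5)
Your proof is correct and follows essentially the same route as the paper's: per-phase concentration events from Lemma~\ref{lem:chi_squared_concentration} applied to the exact design $X_i^\top X_i = (n_i/d)A$, a threshold index $i^\star$, your soundness/completeness facts (which are just the contrapositives of the paper's two stopping-criterion arguments, with the paper's ``$i^\star+4$'' shift folded into your definition of $i^\star$), a union bound split at the threshold with geometric sums of the $\delta_i$ and $n_i$, and inversion of the formula for $U$ to produce the $\overline{\log}$ factor. The differences are purely presentational, so no further changes are needed.
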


\begin{proof}
%
Define the threshold index
\begin{equation*} \label{eq:istardef}
i^\star = \min \big\{
    i : \; U^2(\delta_i, n_i) \leq \|\theta\|^2_A
\big\} \, , 
\end{equation*}
around which the warm-up should terminate. We first prove an upper bound on the length
of the phase at that threshold index, 
  \begin{equation}
    \label{eq:bound_i_star}
    2^{i^\star} \leq  
    \frac{(2 + \sqrt 2)\sigma^2 d}{\|\theta\|_A^2} 
    \, 
    \overline \log \bigg(\frac{T\|\theta\|_A^2}{\sigma^2d^2}\bigg)
    + 1 \,.
\end{equation}

\paragraph{Proof of \eqref{eq:bound_i_star}}
    This essentially consists in inverting the formula of the definition of $U$. To this
    end define,
    \[
        \bar n = 
        \frac{(1 + \sqrt 2 / 2)\sigma^2d^2}{\|\theta\|^2} 
        \overline \log \bigg(\frac{T\|\theta\|^2}{\sigma^2d^2}\bigg)
        \,.
    \]
  Define $\log_+ (x) = \log(\max (x, 1))$. If $n_i \geq \bar n$, then, using the inequality
  $2\sqrt u \leq \sqrt 2 / 2 + (\sqrt 2 )u$ for $u \geq 0$, we get
  \begin{align*}
      U(\delta_i, n_i) 
      &=
      \frac{\sigma^2d^2}{n_i} \bigg( 1  +
      2\sqrt{\frac{1}{d} \log_+ \Big(\frac{d}{T n_i}} \Big)
      + \frac{2}{d} \log_+ \Big(\frac{d}{T n_i}\Big)\bigg) \\
      &\leq
      \frac{\sigma^2d^2}{\bar n} \bigg( 1  +
      2\sqrt{\frac{1}{d} \log_+ \! \Big(\frac{d}{T \bar n}}\Big)
      + \frac{2}{d} \log_+ \! \Big(\frac{d}{T \bar n}\Big)\bigg) \\
      &\leq
      \frac{\sigma^2d^2}{\bar n} \bigg( 2  
      + \frac{3}{d} \log_+ \! \Big(\frac{d}{T \bar n}\Big)\bigg)
      = \|\theta\|^2
      \frac{1 + \sqrt 2 / 2  
      + \frac{2 + \sqrt 2}{d} \log_+ \! \Big(\frac{d}{T \bar n}\Big)}
      {(1 + \sqrt 2 / 2)\overline \log \Big(\frac{T\|\theta\|^2}{\sigma^2d^2}\Big) } \\
      &\leq 
      \|\theta\|^2
      \frac{1 + \sqrt 2 / 2  
      + \frac{2 + \sqrt 2}{d} \log_+\!\!\Big(\frac{T\|\theta\|^2}{2\sigma^2d}\Big)}
      {(1 + \sqrt 2 / 2)\overline \log \Big(\frac{T\|\theta\|^2}{\sigma^2d^2}\Big) } \\
      & \leq 
      \|\theta\|^2
      \frac{ 1 + \sqrt 2 / 2
      + (1 + \sqrt 2 / 2)\log_+\!\!\Big(\frac{T\|\theta\|^2}{ \sigma^2 d^2}\Big)}
      {(1 + \sqrt 2 / 2)\overline \log \Big(\frac{T\|\theta\|^2}{\sigma^2 d^2}\Big) }
      = \|\theta\|^2 \, ,
  \end{align*}
  where we also used the facts that $d \geq 2$ and
  $
      1 / \bar n \leq \|\theta\|^2 / (\sigma^2 d^2) \,.
  $
  Therefore, we have shown that if $n_i \geq \bar n$, then $i \geq i^\star$, hence
  \[
  2^{i^\star}
  \leq \min \{ 2^i \; | \; n_i \geq \bar n \}
  \leq \max \Big(\frac{2 \bar n }{d}, 1\Big) 
  \leq \frac{2 \bar n }{d} + 1 \,, 
  \]
which is the claimed inequality. 

Now that \eqref{eq:bound_i_star} is proved, let us proceed with the next steps. Let
$\hati$ be the index of the warm-up round in which the algorithm terminates. Then
by design, $\|\hat \theta_{\hati}\| > 3U(\delta_{\hati}, n_{\hati})$. Therefore, by
Lemma~\ref{lem:a-bgeqa} and the triangle inequality, 
\begin{align*}
    \P\Big[ 
        \big| \|\hat \theta_{\hati }\| - \|\theta \| \big| > \frac{1}{2} \|\theta\| 
        \Big]
    &\leq \P\Big[ 
            \big| \|\hat \theta_{\hati}\| - \|\theta \| \big| 
            > \frac{1}{3} \|\hat \theta_{\hati}\|
        \Big]  \\
    &\leq  \P\Big[ | \|\hat \theta_{\hati }\| - \|\theta \| | 
        > U(\delta, n_{\hati})
            \Big]
    \leq \P\Big[  \|\hat \theta_{\hati } - \theta \| > U(\delta, n_{\hati})
            \Big] 
    \,.
\end{align*}
With a union bound on the possible values of~$\hati$, and applications of
Lemma~\ref{lem:chi_squared_concentration}, the probability above is less than
\begin{align}
     \sum_{i=1}^{i^\star + 4}  \P\Big[  \|\hat \theta_i -  \theta \| 
    > U(\delta_i, n_{i}) \Big] 
        + \P\big[\hati > i^\star + 4\big] 
    &\leq \sum_{i=1}^{i^\star + 4} \delta_i
        + \P\big[\hati > i^\star + 4\big] \nonumber \\
    &\leq \frac{d}{T} 2^{i^\star + 5} 
        + \P\big[\hati > i^\star + 4\big] 
        \,.
    \label{eq:sum_delta}
\end{align}
    The number of warm-up rounds can be written as 
    \[
      N_{\warmup} = \min \bigg( T, \sum_{i = 1}^{\hati} n_i \bigg)
    \]
    so that, doing a union bound over the value of $\hati$ as above, 
    \begin{equation} \label{eq:sum_ni}
      \E[N_{\warmup}]
        \leq \sum_{i = 1}^{i^\star + 4} n_i 
        + T \P\big[\hati > i^\star + 4\big] 
        \leq d 2^{i^\star + 5} 
        + T \P\big[\hati > i^\star + 4\big]  
        \,.
    \end{equation}
    Now observe that on the event 
    $
        \{\hati > i^\star + 4\big\}
    $, 
    we have
    \[  
        \|\hat \theta_{i^\star + 4}\| \leq 3 U(\delta_{i^\star + 4}, n_{i^\star + 4})
        \quad \text{and} \quad
        U(\delta_{i^\star + 4}, n_{i^\star + 4} )
        < \frac{1}{4} U(\delta_{i^\star}, n_{i^\star})
       \leq \frac{1}{4} \|\theta\| 
       \,.
    \]
    Therefore, by combining both inequalities, on this event, 
    \[
        \|\theta - \hat \theta_{i^\star + 4}\| 
        \geq
        \|\theta\| - \|\hat \theta_{i^\star + 4}\|
        > 4 U(\delta_{i^\star + 4}, n_{i^\star + 4} )
         - 3U(\delta_{i^\star + 4}, n_{i^\star + 4} ) 
         = U(\delta_{i^\star + 4}, n_{i^\star + 4} ) 
         \,.
    \]
    We have thus shown that 
    \[
        \P\big[\hati > i^\star + 4\big]
        \leq 
        \P\big[
        \|\theta - \hat \theta_{i^\star + 4}\| 
        > U(\delta_{i^\star + 4}, n_{i^\star + 4} ) \big]
        \leq \delta_{i^\star + 4} 
        = \frac{d}{T} 2^{i^\star + 4}
        \,.
    \]
    Plug this in \eqref{eq:sum_delta} and \eqref{eq:sum_ni} to conclude.
    \[
        \P\Big[
            \hat B  \notin \big[  (1 / 2)\|\theta\|_A, (3 / 2)\|\theta\|_A\big]
            \Big]
        \leq 
        \frac{48 d}{T}  2^{i^\star}
        \quad \text{and} \quad
        \E[N_\warmup] \leq  48 d 2^{i^\star} 
        \,.\qedhere
    \]
  \end{proof}

\subsection{Technical lemmas}

\begin{lemma}
  \label{lem:ub_inst_regret_square}
  For any $\theta, u \in \R^d$, if  $\theta^\top A u \geq 0$ then, 
  \[
    \theta^\top (x^\star(\theta) - x^\star(u))
    \leq \frac{1}{\|\theta\|_A} \|\theta - u \|_A^2
    \,.
  \]
\end{lemma}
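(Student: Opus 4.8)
The plan is to reduce the statement to an elementary scalar inequality by passing through the square root $A^{1/2}$. First I would note that the center cancels in the difference of optimal actions, $x^\star(\theta) - x^\star(u) = A\theta/\|\theta\|_A - Au/\|u\|_A$, so the lemma is insensitive to $c$ and it suffices to argue with $x^\star(\theta) = A\theta/\|\theta\|_A$. Introducing the \emph{Euclidean} unit vectors $\hat v := A^{1/2}\theta/\|\theta\|_A$ and $\hat w := A^{1/2}u/\|u\|_A$ (assuming $\theta, u \neq 0$), we have $x^\star(\theta) = A^{1/2}\hat v$, $x^\star(u) = A^{1/2}\hat w$, and since $A^{1/2}\theta = \|\theta\|_A \hat v$, a direct computation gives
\[
\theta^\top\big(x^\star(\theta) - x^\star(u)\big)
= (A^{1/2}\theta)^\top(\hat v - \hat w)
= \|\theta\|_A\,(1 - \hat v^\top \hat w)\,.
\]

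Next I would expand the right-hand side in the same coordinates. Writing $a := \|\theta\|_A$ and $b := \|u\|_A$, we have $A^{1/2}\theta = a\hat v$ and $A^{1/2}u = b\hat w$, so that
\[
\|\theta - u\|_A^2 = \|a\hat v - b\hat w\|^2 = a^2 - 2ab\,(\hat v^\top \hat w) + b^2\,.
\]
Substituting both expressions into the claimed bound and clearing the positive factor $1/a$, the inequality is equivalent to $a^2(1 - \hat v^\top \hat w) \le a^2 - 2ab\,\hat v^\top \hat w + b^2$, which after cancelling $a^2$ rearranges to the single scalar inequality $(\hat v^\top \hat w)\,a(2b - a) \le b^2$.

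It then remains to verify this last inequality, and this is the only place the hypothesis $\theta^\top A u \ge 0$ enters: since $\theta^\top A u = ab\,\hat v^\top \hat w$ with $a,b > 0$, the assumption forces $\hat v^\top \hat w \ge 0$, while Cauchy--Schwarz gives $\hat v^\top \hat w \le 1$. I would then use the identity $a(2b - a) = b^2 - (a - b)^2 \le b^2$: if $a(2b-a) \ge 0$, multiplying by $\hat v^\top \hat w \in [0,1]$ only decreases it, so the left-hand side stays $\le b^2$; if $a(2b-a) < 0$, the product is nonpositive, hence $\le b^2$ trivially. Either way the bound holds, which closes the argument. The subtlety worth flagging — the genuine "obstacle" — is that the sign hypothesis is \emph{essential}: without $\hat v^\top \hat w \ge 0$ the term $a(2b-a)\,\hat v^\top \hat w$ could be positive and large, so the sign case analysis must be carried out explicitly rather than collapsed into a single Cauchy--Schwarz estimate.
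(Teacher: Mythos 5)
Your proof is correct, and it shares the paper's first move---using $A^{1/2}$ to pass to the Euclidean geometry of the unit vectors $\hat v$ and $\hat w$---but it finishes by a genuinely different argument. After the common reduction, the paper writes the commit error as half a squared chord length, $\theta^\top (x^\star(\theta) - x^\star(u)) = \frac{1}{2\|\theta\|}\big\|\theta - \frac{\|\theta\|}{\|u\|}u\big\|^2$, then uses the sign hypothesis together with Cauchy--Schwarz to bound this by $\frac{1}{\|\theta\|}\big(\|\theta\|^2 - (\theta^\top u)^2/\|u\|^2\big) = \frac{1}{\|\theta\|}\inf_{\lambda \in \R}\|\theta - \lambda u\|^2 \le \frac{1}{\|\theta\|}\|\theta - u\|^2$, i.e.\ a projection argument onto the line spanned by $u$. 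You instead reduce everything to the scalar inequality $s\,a(2b-a) \le b^2$ with $a = \|\theta\|_A$, $b = \|u\|_A$, $s = \hat v^\top \hat w \in [0,1]$, and verify it from the identity $a(2b-a) = b^2 - (a-b)^2$ plus a sign case split. Your route makes completely explicit where the hypothesis $\theta^\top A u \ge 0$ enters (exactly to guarantee $s \ge 0$) and is purely mechanical; the paper's route is more geometric and yields a slightly stronger intermediate bound, $\frac{1}{\|\theta\|_A}\inf_{\lambda}\|\theta - \lambda u\|_A^2$, which is invariant under rescaling of $u$---natural, since $x^\star(u)$ depends only on the direction of $u$. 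One shared caveat: like the paper's proof, yours implicitly assumes $u \neq 0$ and $\theta \neq 0$ (both divide by these norms), which is harmless in the lemma's application.
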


\begin{proof}
  Without loss of generality $A = I_d$, since $\|A^{1/2}\theta\| = \|\theta\|_A$ and 
  \[
    \theta^\top \big(x^\star_A(\theta) - x^\star_A(u)\big)
    = A^{1 /2}\theta^\top \big(x^\star_I(A^{1 /2}\theta) - x^\star_I(A^{1 /2}u)\big)
    \,.
  \]
  Then $x^\star(\theta) = (1 / \|\theta\|)\theta$ and  $x^\star(u) = (1 / \|u\|)u$, so
   \[
    \theta^\top (x^\star(\theta) - x^\star(u))
    = \frac{\|\theta\|}{2} \Big\|\frac{1}{\|\theta\|} \theta - \frac{1}{\|u\|} u \Big\|^2
    = \frac{1}{2 \|\theta\| } 
    \Big\| \theta - \frac{\|\theta\|}{\|u\|} u \Big\|^2
    \,. 
  \]
  Furthermore, using the fact that $ 0 \leq \theta^\top u \leq \|\theta\| \|u\|$, 
  \[
    \Big\| \theta - \frac{\|\theta\|}{\|u\|} u \Big\|^2
    = 2\| \theta \|^2  - 2 \frac{\|\theta\|}{\|u\|} \theta^\top u
    \leq 
    2\| \theta \|^2  - 2 \frac{1}{\|u\|^2} (\theta^\top u)^2
    \,.
  \]
  Recognizing that the right-most term is twice the distance from $\theta$ to its 
  projection on the line with direction $u$,
  \[
    \Big\| \theta - \frac{\|\theta\|}{\|u\|} u \Big\|^2
    \leq
    2 \Big( \|\theta\|^2 - \frac{1}{\|u\|^2} (\theta^\top u)^2 \Big)
    = 2 \inf_{\lambda \in \R} \| \theta  - \lambda u \|^2
    \leq 2 \| \theta - u \|^2
    \, .
    \qedhere
  \]
\end{proof}

\begin{lemma}\label{lem:a-bgeqa}
    For any $a, b, \lambda \geq 0$ such that $ |a - b| \geq \lambda a$, 
    we have $|a - b| \geq \lambda / (1 + \lambda) b$.
\end{lemma}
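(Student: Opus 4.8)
The plan is to treat this as a direct algebraic manipulation anchored on a single triangle inequality. Write $D := |a - b|$ for the common quantity appearing on the left of both the hypothesis and the conclusion. The hypothesis reads $D \geq \lambda a$, and the goal is $D \geq \frac{\lambda}{1+\lambda} b$. The only input relating $b$ to the quantities we control is the elementary bound $b - a \leq |a - b| = D$, i.e.\ $b \leq a + D$, which holds for all reals regardless of sign and uses nothing but the definition of absolute value.

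First I would establish $b \leq a + D$. Substituting this into the target, it suffices to prove the stronger inequality
\[
  \frac{\lambda}{1+\lambda}\,(a + D) \leq D
  \,.
\]
Since all quantities are nonnegative and $1 + \lambda > 0$, I would clear the denominator by multiplying through by $1 + \lambda$, obtaining the equivalent statement $\lambda(a + D) \leq (1+\lambda) D$. Expanding both sides gives $\lambda a + \lambda D \leq D + \lambda D$, and cancelling the common term $\lambda D$ reduces this exactly to $\lambda a \leq D$.

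At this point the chain closes: $\lambda a \leq D$ is precisely the hypothesis $|a-b| \geq \lambda a$, so the stronger inequality holds, and combined with $b \leq a + D$ it yields $\frac{\lambda}{1+\lambda} b \leq \frac{\lambda}{1+\lambda}(a+D) \leq D = |a-b|$, as required. There is essentially no obstacle here; the one thing to get right is the \emph{direction} of the triangle inequality — one must bound $b$ from above by $a + |a-b|$ (not $a - b$ by $D$, which would be the wrong sign) — after which the algebra collapses the desired bound directly onto the hypothesis with no slack to spare. The edge case $\lambda = 0$ is automatic since the conclusion reads $D \geq 0$, and $b = 0$ is likewise trivial, so no separate case analysis is needed.
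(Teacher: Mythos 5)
Your proof is correct. It differs from the paper's in structure, though the algebraic core is the same: the paper argues by a case disjunction on the sign of $a-b$, handling $a \geq b$ trivially via $|a-b| \geq \lambda a \geq \lambda b \geq \frac{\lambda}{1+\lambda} b$, and treating $a < b$ through the identity $(1+\lambda)|a-b| = |a-b| + \lambda(b-a) \geq \lambda a + \lambda(b-a) = \lambda b$. You avoid the case split entirely by using the one-sided bound $b \leq a + |a-b|$, valid for all reals, after which the same cancellation (reducing $\lambda(a+D) \leq (1+\lambda)D$ to the hypothesis $\lambda a \leq D$) closes the argument in a single pass. What your version buys is one unified chain of inequalities with no branching; what the paper's buys is that each branch is a one-line computation, with the second case using an exact equality $b - a = |a-b|$ rather than an inequality. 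The difference is cosmetic rather than substantive: both proofs are complete, elementary, and hinge on the same reduction of the target to the hypothesis.
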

\begin{proof}
    We consider two cases.
    If $a \geq b$, then $|a-b| \geq \lambda a \geq \lambda b \geq \lambda / ( 1 +
    \lambda) b$ proving the claim. Otherwise, if $a < b$, then
    $
        (1 + \lambda) |a-b| 
        = |a-b| + \lambda(b-a)
        \geq \lambda a +  \lambda (b - a) = \lambda b \,.
    $
\end{proof}
\newpage

\section{Action Sets That Do Not Contain Zero}
\label{app:reduction}

\subsection{Non-centered Action Sets}
\label{sec:non-centered-ellipoids}
We now describe how to modify our algorithm to handle non-centered ellipsoids, i.e., 
sets of the form 
\[
	\mathcal{X}= \{ x \in \mathbb{R}^d: \|x-c\|_{A^{-1}} \le 1 \}
\]
for $c \neq 0$. 
\paragraph{ETC-type Algorithms}
The modification applies more generally to any ETC-type algorithm, going from action
sets $\cX$ that contain $0$ to the translated $c + \cX$ for any $c \in \R^d$.
Let us give some definitions.
\begin{definition}
 An algorithm is an ETC-type algorithm if there exists a stopping time $T_{\commit}$
 such that for any $t > T_{\commit}$, the algorithm commits to the action $x_t =
 x_{T_{\commit} + 1}$.
\end{definition}
In the case of Algorithm~\ref{alg:main-alg}, the warm-up rounds should be counted as
exploration rounds.
We denote the commit action of an ETC-type algorithm by $x_{\commit} := x_{T_{\commit}
+ 1}$. An ETC-type algorithm comes with a natural upper bound on its regret, as
\begin{equation}\label{eq:etc_reg_decomp}
  R_T(\cA, \cX, \theta)
  \leq \Big(\max_{x, y \in \cX} \|x - y\|\Big) \|\theta\|_\star \,  \E[T_{\commit}]
    + T\,  \E [ \theta^\top (x^\star(\theta) - x_{\commit}) ]
  \,, 
\end{equation}
where $\|\cdot\|$ denotes any norm on $\R^d$ and $\| \cdot\|_\star$ denotes its dual norm.

Consider a parametric ETC-type algorithm $\alg$, that takes a variance proxy and
time-horizon as inputs. Assume that for any linear bandit problem, the algorithm $\cA =
\alg(\sigma^2, T)$ tuned with $\sigma^2$ and $T$ satisfies
\begin{equation}
  \label{eq:BandC}
  \E  \big[ T_{\commit}(\cA) \big] 
  \leq B(\theta, \sigma^2, T)
  \quad \text{ and } \quad
  \E
  \big[ \theta^\top \big(
        x_{\cX}^\star(\theta) - x_{\commit}(\cA)
        \big)
        \big] 
  \leq C(\theta, \sigma^2, T)
  \,, 
\end{equation}
for some $B$ and $C$, called the expected exploration time and expected commit
error, respectively.

\paragraph{A Reduction}
We transform any ETC-type algorithm for action sets $\cX \ni 0$ into an algorithm for
the translated set $c + \cX$ which retains essentially the same bound as above up to
constant factors.

Consider a parametric algorithm $\alg$ for $\cX$ that takes the variance proxy and
horizon as inputs, and let $\cA = \alg(2 \sigma^2, T / 2)$.
The recipe consists in replacing the plays $\tilde x_t \in \cX$ in each round of
exploration made by $\mathcal A$ by two rounds of exploration, playing first the reference point $c$ and then $c +\tilde x_t$, at times $2t- 1$ and $2t$.
Now observe that 
\[
  y_{2t-1} - y_{2t}   = \theta^\top \tilde x_t + (z_{2t-1} - z_{2t})  \,.
\]
Therefore, by providing $y_{2t-1} - y_{2t}$ as inputs to $\cA$, the observations
received by $\cA$ are consistent with the linear bandit model with action set $ \cX$,
except that the sub-gaussian variance proxy is doubled.
Finally, since the optimal action is equivariant by translation of the action set,
the commit error is the same in both cases. 

\begin{algorithm} \caption{Reduction for non-centered action sets for ETC algorithms}\label{alg:non-centered} 
  \SetKwInOut{Input}{Input}
  \Input{
  ETC-style algorithm $\cA$ for action set $\cX$ containing $0$. 
  Translation vector $c \in \R^d$.}
  \KwInit{$t = 1$}
  \While{$t \leq T_\commit(\cA)$}{
    Receive $\tilde x_t \in \cX$ from $\cA$\\
    Play $x_{2t -1} = c$, receive $y_{2t - 1}$. 
    Play $x_{2t} = \tilde x_t$, receive $y_{2t}$.\\
    Give $y_{2t-1} - y_{2t}$ as input to $\cA$ \\
  }
      Get $x_{\commit}$ from $\cA$ \\
  \While{$t \leq T$}{
    Play $c + x_{\commit}$ \\
  }
\end{algorithm}

\subsection{Analysis of the Reduction}
The next theorem states that the algorithm obtained by applying the reduction enjoys
essentially the same guarantees as the initial algorithm. 

\begin{theorem}
  \label{thm:non_centered_gen}
  Let $\cX$ denote an action set containing $0 \in \R^d$. Let $\alg$ be an
  ETC type algorithm with bounds $B$ and $C$ (see \eqref{eq:BandC}). 
  Denoting by $\cA_c$ the algorithm obtained by applying the reduction
  Algorithm~\ref{alg:non-centered} to $\alg(2\sigma^2, T / 2)$ for translation vector $c
  \in \R^d$, for any linear bandit problem with parameters $\theta$ and $\sigma^2$,  
  \[
    R_T(\cA_c, c + \cX, \theta)  \leq  
    2\Big(\max_{x, y \in \cX} \|x - y\|\Big) \|\theta \|_\star 
    B(\theta, 2\sigma^2, T / 2)
    +  T \, C(\theta,  2\sigma^2, T/2) \,.
  \]
\end{theorem}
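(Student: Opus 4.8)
The plan is to apply the generic ETC regret decomposition \eqref{eq:etc_reg_decomp} directly to $\cA_c$ on the translated set $c + \cX$, and then identify each of its ingredients --- the diameter, the expected exploration time, and the expected commit error --- with the corresponding quantity for the inner algorithm $\cA = \alg(2\sigma^2, T/2)$ run on $\cX$.

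First I would confirm that the data fed into $\cA$ constitutes a bona fide linear bandit instance on $\cX$ with variance proxy $2\sigma^2$. As already observed in the description of Algorithm~\ref{alg:non-centered}, the differenced reward passed back to $\cA$ after it proposes $\tilde x_t \in \cX$ is $y_{2t-1} - y_{2t} = \theta^\top \tilde x_t + (z_{2t-1} - z_{2t})$. The effective noise $z_{2t-1} - z_{2t}$ is centered, independent across $t$, and $2\sigma^2$-subgaussian (a difference of two independent $\sigma^2$-subgaussians), while $\tilde x_t$ is selected by $\cA$ from the past differences alone, respecting the adaptivity required by the model. Hence $\cA$ faces precisely the instance it was tuned for, and its guarantees \eqref{eq:BandC} apply with arguments $(2\sigma^2, T/2)$.

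Next I would record two structural facts about translation. Since translating by $c$ preserves all pairwise distances, $\max_{x,y \in c + \cX}\|x - y\| = \max_{x,y \in \cX}\|x - y\|$; and since the optimal action is equivariant under translation, $x^\star_{c + \cX}(\theta) = c + x^\star_\cX(\theta)$. Because $\cA_c$ commits to $c + x_{\commit}(\cA)$, the per-round commit error is $\theta^\top\big(x^\star_{c+\cX}(\theta) - (c + x_{\commit}(\cA))\big) = \theta^\top\big(x^\star_\cX(\theta) - x_{\commit}(\cA)\big)$, i.e.\ it coincides with the commit error of $\cA$ and is bounded in expectation by $C(\theta, 2\sigma^2, T/2)$.

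Finally I would count rounds and assemble the pieces. Every round of $\cA$ is realized by exactly two real plays, so the commit time of $\cA_c$ in real time satisfies $T_{\commit}(\cA_c) = 2\, T_{\commit}(\cA) \leq T$, leaving room for the commit phase within the horizon $T$. Substituting into \eqref{eq:etc_reg_decomp} written for $\cA_c$, the exploration cost is at most $\big(\max_{x,y\in\cX}\|x-y\|\big)\|\theta\|_\star\, \E[2\, T_{\commit}(\cA)] \leq 2\big(\max_{x,y\in\cX}\|x-y\|\big)\|\theta\|_\star\, B(\theta, 2\sigma^2, T/2)$, while the commit cost is at most $T\, C(\theta, 2\sigma^2, T/2)$, yielding exactly the claimed bound. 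The main obstacle --- the only genuinely substantive step, the rest being bookkeeping --- is the first one: making rigorous that the differenced observations form a valid subgaussian linear bandit instance with the correct doubled variance proxy and a properly adapted action sequence, so that the black-box guarantees of $\alg$ may legitimately be invoked.
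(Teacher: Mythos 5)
Your proposal is correct and takes essentially the same route as the paper's proof: you verify that the differenced observations fed to $\cA$ form a valid linear bandit instance on $\cX$ with variance proxy $2\sigma^2$ so that the bounds $B$ and $C$ in \eqref{eq:BandC} apply, note that $T_{\commit}(\cA_c) = 2\,T_{\commit}(\cA)$, that translation preserves the diameter and that the optimal action (hence the commit error) is translation-equivariant, and then plug everything into the decomposition \eqref{eq:etc_reg_decomp}. If anything, you spell out more carefully than the paper why the differenced noise is independent, centered, and $2\sigma^2$-subgaussian.
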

This implies in particular that if there is a good ETC-type algorithm for an action set
containing $0$, then applying a translation to the action set can only make the linear
bandit problem easier. 
Note also that the translation vector $c$ does not appear in the upper bound, perhaps 
surprisingly. 
\begin{proof}
  Let $T_{\commit}(\cA)$ denote the commit time, and $x_{\commit}(\cA) \in \cX$ the
  commit action of $\cA$ on its observations.  
  The sequence of observations given to $\cA$ fit exactly a linear bandit model with
  subvariance proxy $2 \sigma^2$ and tuned for horizon $T / 2$, so the guarantees on
  $\cA$ hold:
  \[
    \E  \big[ T_{\commit}(\cA) \big] 
    \leq B(\theta, 2\sigma^2, T / 2)
    \quad \text{ and } \quad
    \E 
    \big[ \theta^\top \big(
          x_{\cX}^\star(\theta) - x_{\commit}(\cA)
          \big)
          \big] 
    \leq C(\theta, 2\sigma^2, T / 2)
    \,.
  \]
  By construction $\cA_c$ is an ETC-type algorithm with $T_{\commit}(\cA_c) = 2
  T_{\commit}(\cA)$, and the commit action is $x_{\commit}(\cA_c) = c +
  x_{\commit}(\cA)$. So the regret decomposition \eqref{eq:etc_reg_decomp} applies and
  \begin{align*}
    R_T(\theta)
     & \leq \Big(\max_{x, y \in \cX} \|x - y\|\Big) 
    \|\theta\|_\star \,  \E[T_{\commit}(\cA_c)]
      + T\,  \E [ \theta^\top (x^\star_{c + \cX}(\theta) - x_{\commit}( \cA_c)) ] \\
    & \leq  2\Big(\max_{x, y \in \cX} \|x - y\|\Big) 
    \|\theta\|_\star \,  \E[T_{\commit}(\cA)]
      + T\,  \E [ \theta^\top (x^\star_{\cX}(\theta) - x_{\commit}( \cA)) ]
      \,. \qedhere
  \end{align*}
\end{proof}

\section{Proof details for Section~\ref{sec:Regret lower bound}}
\label{app:lower_bound}

\begin{proof}[Proof of Proposition~\ref{prop:dBlower}]
	We consider a noiseless linear bandit problem with mean parameter $\theta$ sampled
	 from a Gaussian distribution $ \cN (0 , (B^2 /d) A^{-1})$. 
	 The idea of the proof is to lower bound the expected regret of any algorithm when 
	 averaged over this distribution.

 	After $t\leq d$ rounds of playing, let $\cF_t$ denote the $\sigma$-algebra generated 
	by the observations $y_1, \dots, y_t$, and the chosen actions
	$x_1, \dots, x_{t+1}$ (we include $x_{t+1}$ to account for the possible internal 
	randomization of the algorithm). 
    At time $t + 1$, if we integrate both over the initial randomization
    on $\theta$ and on the internal randomization, by the tower rule and since $x_{t+1}$
    is $\cF_t$-measurable,
	\[
        \E[(x_{t+1} - c)^\top \theta] 
        = \E[(x_{t+1} - c)^\top  \E[\theta | \cF_t ]] 
		\leq \E\big[\| \E \big[\theta | \cF_t \big]\|_A\big]
        \, .
	\]
	By standard results on Bayesian linear regression, (see, e.g., 
	Chapter 9.2 in \citet{hoff2009first}), 
	the distribution of $A^{1/2}\theta$ conditional on $\cF_t$ is
	$
		\cN \big( P_t \theta, (B^2/d)(I_d - P_t) \big)
	$
	where $P_t$ denotes the orthogonal projection on the span of $x_1, \dots, x_t$. 
	Orthonormalizing the family $(x_t)$ yields orthonormal vectors 
	$e_1, \dots, e_t$ such that $e_i$ is $\cF_{i-1}$-measurable and
	\[
		\|A^{-1/2}P_t A^{1/2} \theta \|_A^2
		= \|P_t A^{1/2} \theta \|^2
		= \sum_{i=1}^{t} ((A^{1/2} \theta)^\top e_i)^2
		\,.
	\]
	Then, since $e_i$ is $\cF_{i-1}$-measurable, the vector $(A^{1/2}\theta)^\top
	 e_i$ is normally distributed with distribution
	 $
		\cN \big( 0, (B^2/d)e_i^\top (I_d - P_{i-1})e_{i}\big)
		=
		\cN \big( 0, B^2/d \big).
	$
	So for all $i \leq d$, we have the equality
	$
	\E \big[ ((A^{1/2} \theta)^\top e_i)^2 | \cF_{i-1} \big]
	= B^2 / d \,.
	$
	Applying this repeatedly, we see that
	\[
		\E [\|A^{-1/2}P_tA^{1/2}\theta \|_A] 
		\leq \sqrt{\E \big[\|A^{-1/2} P_t A^{1/2} \theta \|_A^2\big]} 
		=  B\sqrt{\frac{t}{d}}
		\,.
	\]
	This implies that the averaged regret up to time $d$ is at least
	\begin{multline*}
		\E [R_d(\theta)]
		= \sum_{t=1}^{d} \E[(x^\star(\theta) - c)^\top \theta] - \E[(x_t- c)^\top \theta ]
		\\
		\geq \sum_{t=1}^{d} \E[\|\theta\|_A] - \E[\|A^{-1/2} P_{t-1} A^{1/2}\|_A]
		\geq \sum_{t=1}^{d} \E[\|\theta\|_A] - B \sqrt{\frac{t-1}{d} }
		\,.
	\end{multline*}
	Note also that since $A^{1/2}$ is a standard gaussian vector, 
	\begin{align*}
		\mathbb{E}_{\theta}\big[ \|A^{1/2}\theta\| \big]
		&= \frac{B}{\sqrt d} \frac{\sqrt{2} \Gamma((d+1)/2)}{\Gamma(d/2)}  
		  \geq B \sqrt{(d-1) / d} \,, 
	\end{align*}
	where we used Gautschi's inequality, 
	\begin{align*}
		x^{1-s}	\le \frac{\Gamma(x+1)}{\Gamma(x+s)}, 
	\end{align*}
	with $x = (n-1) /2 $ and $s = 1/2$. 
	All in all, this implies that
	\[
		\E [R_T(\theta)	]
		\geq  B \sum_{t=1}^{d} \sqrt{\frac{d-1}{d}} - \sqrt{\frac{t-1}{d} }
		\geq \frac{1}{3} d B
		\,.
	\]
	To complete the averaging argument, we also need to ensure that the norm 
    of $\theta$ is large enough. Let $c$ be some numerical parameter, of which we will set
    the value later on. By a case disjunction, 
	\[
		\E \big[R_d(\theta) \mathds{1}\{\|\theta\|_A \leq c B \}\big]
		= 
		\E [R_d(\theta)] 
		-
		\E [R_d(\theta) \mathds{1}\{\|\theta\|_A > c B \}]
		\,.
	\]
	We apply a version of the peeling trick decomposing 
	over the values of $\|\theta\|_A$ in a grid to bound
	\begin{multline*}
		\E [R_d(\theta) \mathds{1}\{\|\theta\|_A > c B \}]
		 \leq \sum_{i = 1}^{+\infty} 2t \sqrt{i+1} cB 
			\P[\|\theta\|_A > \sqrt i c B ] \\
		\leq 2\sqrt 2 dB \sum_{i = 1}^{+\infty}  
		\sqrt{c^2i} \P[\|\theta\|_A > \sqrt i c B ] 
		= \frac{2\sqrt 2 dB}{\sqrt d} \sum_{i = 1}^{+\infty}  
		\sqrt{dic^2} \P[\|\theta\|_A > \sqrt i c B ]
		\,.
	\end{multline*}
	Use the concentration inequality Lemma~\ref{lem:chi_squared_concentration}, which 
	applies in particular to the Gaussian vector $A^{1/2}\theta$, 
	\[
		\P\Big[
			\|A^{1/2}\theta\|^2 \geq B^2 \Big(2 + \frac{3x}{d} \Big)
		\Big]
		\leq e^{-x}
		\quad \text{with} \quad
		x = \frac{d}{3} (c^2 \,i  - 2)
		\,.
	\]
	If $c^2 \geq 8 / 7$, then $c^2 i - 2 \geq 3 c^2i / 4$ as $7 ci^2 / 4 \geq 2$
	(remember $i \geq 1$) so
	\[
		\P\big[
			\|A^{1/2}\theta\|^2 \geq ic^2B^2  
		\big]
		\leq \exp \Big( - \frac{idc^2}{4}\Big)
	\]
	Note furthermore that since the function $xe^{-x^2/8}$ is decreasing on 
	$[4, +\infty)$, so for any $x \geq 4$, 
	$
		x e^{- x^2 / 4} \leq 4e^{-2} e^{- x^2 / 8} \,.
	$
	Consequently, if $c \geq 2 \sqrt 2$, then $\sqrt {idc^2} \geq 4$ since
	 $d \geq 2$, and  
	\[
		\sum_{i = 1}^{+\infty}  
		\sqrt{id  c^2} 
		e^{ - idc^2 / 4}
		\leq 
		\sum_{i = 1}^{+\infty}  
		e^{ - idc^2 / 8}
		= \frac{e^{-dc^2 / 8}}{ 1- e^{-dc^2/ 8}}
		\,.
	\]
	We have thus lower bounded the regret on the event that $\|\theta\|_A> cB$ as
	\begin{multline*}
		\E [R_t(\theta) \mathds{1}\{\|\theta\|_A > c B \}]
		\geq \frac{dB}{2} -
		\frac{2\sqrt 2 dB}{\sqrt d} 
		 \frac{e^{-dc^2 / 8}}{ 1- e^{-dc^2/ 8}}
		=  \bigg(\frac{1}{2} - 
		\frac{2\sqrt 2}{\sqrt d} 
		 \frac{e^{-dc^2 / 8}}{ 1- e^{-dc^2/ 8}}
		\bigg) dB \\
		\geq  \bigg(\frac{1}{2} - 
		\frac{2 e^{-c^2 / 4}}{ 1- e^{-c^2/ 4}}
		\bigg) dB 
		\geq  \bigg(\frac{1}{2} - 
		\frac{2 e^{-2}}{ 1- e^{-2}}
		\bigg) dB 
		\geq 0.18 dB \,, 
	\end{multline*}
	we also used the facts that $d \geq 2$ and $c \geq 2 \sqrt 2$. Applying our result
	exactly with $ c = 4$, we deduce the existence of some $\theta$ such that
	 $\|\theta\|_A \leq 4 B$ and
	\[
		R_T(\theta) 
		\geq R_d(\theta)
		\geq 0.07 dB
		\geq \frac{0.07}{4}  d \|\theta\|_A 
		\geq 0.017 d\|\theta\|_A 
		\,.
		\qedhere
	\]
\end{proof}

\clearpage
\section{Glossary}
\printnoidxglossaries

\end{document}